% This must be in the first 5 lines to tell arXiv to use pdfLaTeX, which is strongly recommended.
\pdfoutput=1
% In particular, the hyperref package requires pdfLaTeX in order to break URLs across lines.

\documentclass[11pt]{article}

% Change "review" to "final" to generate the final (sometimes called camera-ready) version.
% Change to "preprint" to generate a non-anonymous version with page numbers.
\usepackage[]{acl}

% Standard package includes
\usepackage{times}
\usepackage{latexsym}

% For proper rendering and hyphenation of words containing Latin characters (including in bib files)
\usepackage[T1]{fontenc}
% For Vietnamese characters
% \usepackage[T5]{fontenc}
% See https://www.latex-project.org/help/documentation/encguide.pdf for other character sets

% This assumes your files are encoded as UTF8
\usepackage[utf8]{inputenc}

% This is not strictly necessary, and may be commented out,
% but it will improve the layout of the manuscript,
% and will typically save some space.
\usepackage{microtype}

% This is also not strictly necessary, and may be commented out.
% However, it will improve the aesthetics of text in
% the typewriter font.
\usepackage{inconsolata}

% If the title and author information does not fit in the area allocated, uncomment the following
%
%\setlength\titlebox{<dim>}
%
% and set <dim> to something 5cm or larger.

\usepackage{amsmath}
\usepackage{amsthm}
\usepackage{amssymb}
\usepackage{enumitem}
\newtheorem{problem}{Problem}
\newtheorem{lemma}{Lemma}
\newtheorem{theorem}{Theorem}

\newtheorem{proposition}{Proposition}
\usepackage{graphicx}
\usepackage{algorithm}
\usepackage{algpseudocode}
\usepackage{bbding}
\usepackage{pifont} 

\usepackage{booktabs}
\usepackage{multirow}

\newcommand{\Xmark}{\ding{55}} % X-mark

\title{Understanding Cross-Domain Adaptation in Low-Resource Topic Modeling}

% Author information can be set in various styles:
% For several authors from the same institution:
% \author{Author 1 \and ... \and Author n \\
%         Address line \\ ... \\ Address line}
% if the names do not fit well on one line use
%         Author 1 \\ {\bf Author 2} \\ ... \\ {\bf Author n} \\
% For authors from different institutions:
% \author{Author 1 \\ Address line \\  ... \\ Address line
%         \And  ... \And
%         Author n \\ Address line \\ ... \\ Address line}
% To start a separate ``row'' of authors use \AND, as in
% \author{Author 1 \\ Address line \\  ... \\ Address line
%         \AND
%         Author 2 \\ Address line \\ ... \\ Address line \And
%         Author 3 \\ Address line \\ ... \\ Address line}

\author{Pritom Saha Akash $\quad$ Kevin Chen-Chuan Chang \\
University of Illinois at Urbana-Champaign, USA \\
 \texttt{\{pakash2, kcchang\}@illinois.edu}
}

%\author{
%  \textbf{First Author\textsuperscript{1}},
%  \textbf{Second Author\textsuperscript{1,2}},
%  \textbf{Third T. Author\textsuperscript{1}},
%  \textbf{Fourth Author\textsuperscript{1}},
%\\
%  \textbf{Fifth Author\textsuperscript{1,2}},
%  \textbf{Sixth Author\textsuperscript{1}},
%  \textbf{Seventh Author\textsuperscript{1}},
%  \textbf{Eighth Author \textsuperscript{1,2,3,4}},
%\\
%  \textbf{Ninth Author\textsuperscript{1}},
%  \textbf{Tenth Author\textsuperscript{1}},
%  \textbf{Eleventh E. Author\textsuperscript{1,2,3,4,5}},
%  \textbf{Twelfth Author\textsuperscript{1}},
%\\
%  \textbf{Thirteenth Author\textsuperscript{3}},
%  \textbf{Fourteenth F. Author\textsuperscript{2,4}},
%  \textbf{Fifteenth Author\textsuperscript{1}},
%  \textbf{Sixteenth Author\textsuperscript{1}},
%\\
%  \textbf{Seventeenth S. Author\textsuperscript{4,5}},
%  \textbf{Eighteenth Author\textsuperscript{3,4}},
%  \textbf{Nineteenth N. Author\textsuperscript{2,5}},
%  \textbf{Twentieth Author\textsuperscript{1}}
%\\
%\\
%  \textsuperscript{1}Affiliation 1,
%  \textsuperscript{2}Affiliation 2,
%  \textsuperscript{3}Affiliation 3,
%  \textsuperscript{4}Affiliation 4,
%  \textsuperscript{5}Affiliation 5
%\\
%  \small{
%    \textbf{Correspondence:} \href{mailto:email@domain}{email@domain}
%  }
%}

\begin{document}
\maketitle
\begin{abstract}
Topic modeling plays a vital role in uncovering hidden semantic structures within text corpora, but existing models struggle in low-resource settings where limited target-domain data leads to unstable and incoherent topic inference. We address this challenge by formally introducing domain adaptation for low-resource topic modeling, where a high-resource source domain informs a low-resource target domain without overwhelming it with irrelevant content. We establish a finite-sample generalization bound showing that effective knowledge transfer depends on robust performance in both domains, minimizing latent-space discrepancy, and preventing overfitting to the data. Guided by these insights, we propose DALTA (Domain-Aligned Latent Topic Adaptation), a new framework that employs a shared encoder for domain-invariant features, specialized decoders for domain-specific nuances, and adversarial alignment to selectively transfer relevant information. Experiments on diverse low-resource datasets demonstrate that DALTA consistently outperforms state-of-the-art methods in terms of topic coherence, stability, and transferability.
\end{abstract}
\section{Introduction}
\label{sec:introduction}

% In today’s digital age, large volumes of text are produced across various domains, making it essential to derive meaningful insights. \textbf{Topic modeling} helps uncover hidden patterns in this text, enabling applications such as document classification, text summarization, content recommendation, and trend analysis. While probabilistic topic models \cite{blei2003latent,blei2006correlated,blei2006dynamic,mcauliffe2007supervised} remain widely used, deep learning has driven the emergence of more advanced variants. For instance, Neural Topic Models (NTMs) \cite{miao2016neural,srivastava2017autoencoding,nguyen2021contrastive,dieng2020topic} leverage deep learning to improve topic representations, often employing Variational Autoencoders (VAEs) to model latent spaces. Further advancements, such as Contextualized Topic Models (CTMs) \cite{bianchi2020pre,bianchi2020cross,grootendorst2022bertopic}, integrate pre-trained language models to enhance topic coherence by capturing contextual dependencies.

In today’s digital age, large volumes of unstructured text corpora are produced across various domains, making it essential to derive meaningful insights. \textbf{Topic modeling} helps uncover hidden patterns in this text, enabling applications such as document classification, text summarization, content recommendation, and trend analysis. While probabilistic topic models \cite{blei2003latent,blei2006correlated,blei2006dynamic,mcauliffe2007supervised} remain widely used, deep learning has driven the emergence of more advanced variants. For instance, Neural Topic Models (NTMs) \cite{miao2016neural,srivastava2017autoencoding,nguyen2021contrastive,dieng2020topic} leverage deep learning to improve topic representations, often employing Variational Autoencoders (VAEs) to model latent spaces. Further advancements, such as Contextualized Topic Models (CTMs) \cite{bianchi2020pre,bianchi2020cross,grootendorst2022bertopic}, integrate pre-trained language models to enhance topic coherence by capturing contextual dependencies.

Despite their success, topic models typically assume the availability of sufficient data to learn meaningful and coherent topics. However, in many real-world scenarios—particularly in emerging or niche domains—data collection is limited by resource constraints, restricted access, or the rapid evolution of knowledge. For instance, fields like \textit{quantum machine learning} may have fewer than 1,000 publicly available documents, while specialized domains such as \textit{legal} or \textit{medical} texts are subject to strict privacy regulations. In such cases, conventional topic models struggle to extract stable and coherent topics from low-resource corpora.

\begin{figure}[!tb]
\centering
\includegraphics[width=1.0\linewidth]{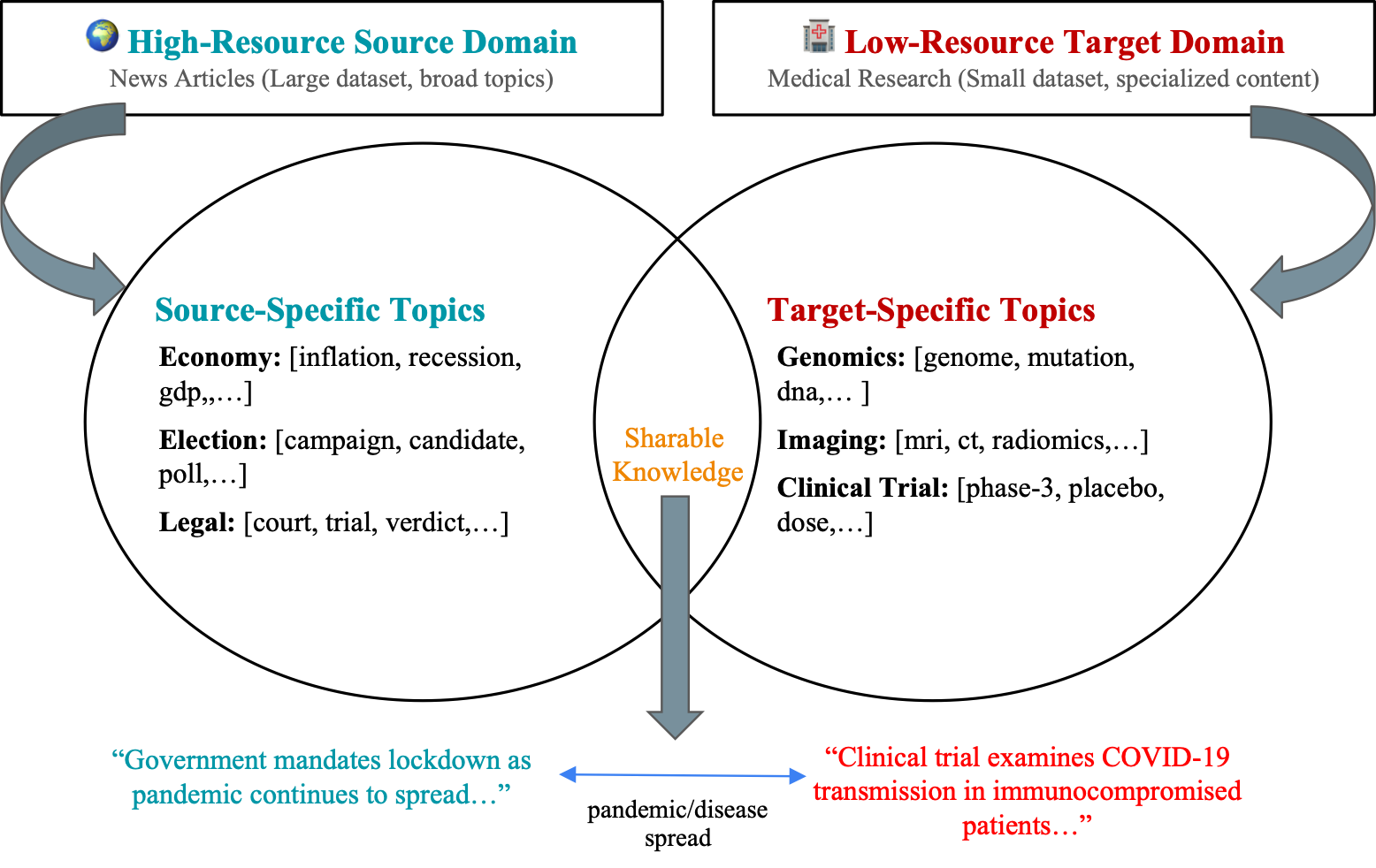}
\caption{An illustration of how a high-resource domain (news) and a low-resource domain (medical research) share certain topics (e.g., “pandemic/disease spread”) while each retains domain-specific content (e.g., “economy” vs. “genomics”). The goal is to transfer only the relevant knowledge without introducing irrelevant information.}
\label{fig:example}
\end{figure}

While there have been several attempts to address \textbf{low-resource topic modeling}, they exhibit some limitations. For instance, an earlier approach \cite{sia2021adaptive} refines LDA by adaptively balancing discrete token statistics with pretrained word embeddings, allowing frequent words to rely on counts while infrequent words leverage external representations. Similarly, embedding-based NTMs \cite{duan2022bayesian,duan2021sawtooth} leverage pretrained word embeddings as transferable knowledge, enabling effective generalization by learning topic embeddings adaptively. However, as word semantics shift across contexts, static embeddings may fail to adapt to unseen tasks, limiting their effectiveness in domain-specific low-resource settings. To address this, Context-Guided Embedding Adaptation \cite{xu2024context} dynamically refines topic-word relationships by adjusting word embeddings based on syntactic and semantic dependencies in the target corpus. While this improves topic coherence in low-resource settings, it still depends solely on target domain data, making it less effective when extreme data scarcity limits meaningful adaptation.

From the above discussion, the key challenge in low-resource topic modeling is effectively leveraging external knowledge while preserving target-specific nuances. As illustrated in Figure~\ref{fig:example}, consider a high-resource source domain like news articles and a low-resource target domain such as medical research. The source offers a broad range of topics—some, like pandemics and mental health, provide shareable knowledge (here shown as pandemic/disease spread), while others, such as elections or economics, remain source-specific and irrelevant to medical research. Similarly, the target domain contains specialized concepts (e.g., genomics, clinical trials), which do not appear in the source. By focusing on pandemic/disease spread, news articles (e.g., “Government mandates lockdown…”) can inform medical studies (e.g., “Clinical trial examines COVID-19 transmission…”), allowing the model to transfer relevant health-related content. However, topics like election or economy from the source domain have no bearing on genomics or imaging in the target. The key challenge, therefore, is to maximize the shareable topics while ensuring that domain-specific information remains intact, thereby enhancing topic discovery under data-scarce conditions.

Building on these insights, \textbf{domain adaptation} \cite{ben2010theory} offers a promising framework to bridge the gap between high-resource source data and low-resource target needs in topic modeling. Although domain adaptation has been widely used in supervised tasks to align features across domains \cite{zhao2019learning,li2021learning}, its potential in unsupervised topic modeling remains largely unexplored. To address this, we formalize domain adaptation for topic modeling by establishing a generalization bound that shows successful transfer depends on: (i) achieving strong performance on both source and target data, (ii) minimizing the discrepancy between source and target latent representations so that only relevant knowledge is transferred, and (iii) applying regularization to prevent overfitting to the source domain. In light of this, we introduce our novel bound minimization algorithm, \textbf{DALTA} (Domain-Aligned Latent Topic Adaptation), which leverages these principles to selectively transfer useful topic structures from high-resource domains while preserving the unique semantic characteristics of the target domain.

In summary, our work provides the following contributions:
\begin{itemize}[nolistsep,leftmargin=*]
\item We derive \textbf{finite-sample generalization bounds} for domain adaptation in low-resource topic modeling, demonstrating that effective transfer relies on robust performance in both source and target domains, alignment of latent representations, and proper regularization to prevent overfitting.
\item Building on these theoretical insights, we propose \textbf{DALTA (Domain-Aligned Latent Topic Adaptation)}, a novel framework that employs a shared encoder to extract domain-invariant topic representations and specialized decoders to capture target-specific semantic nuances. To our knowledge, DALTA is the first method to jointly optimize latent alignment and domain-specific learning in topic modeling under a rigorous theoretical foundation.
\item We conduct extensive experiments on diverse low-resource datasets, showing that DALTA consistently outperforms state-of-the-art methods in terms of topic coherence, diversity, and transferability.
\end{itemize}

\section{Related Work}
\label{sec:related_work}

\subsection{Neural Topic Models}
Traditional probabilistic topic models, such as Latent Dirichlet Allocation (LDA) \cite{blei2003latent} and its extensions \cite{blei2006correlated,blei2006dynamic,mcauliffe2007supervised}, have been widely used for discovering latent semantic structures in text. However, they rely on bag-of-words assumptions that ignore word order and contextual meaning, limiting their ability to capture nuanced semantics. To overcome these limitations, Neural Topic Models (NTMs) leverage deep learning, particularly Variational Autoencoders (VAEs) \cite{kingma2013auto}, to learn richer and more flexible topic representations \cite{miao2016neural,srivastava2017autoencoding}. Contextualized Topic Models (CTMs) \cite{bianchi2020pre, bianchi2020cross,grootendorst2022bertopic} further improve topic coherence by incorporating pre-trained language models (PLMs), allowing them to capture contextual dependencies. More recent works, such as UTopic \cite{han2023unified}, introduce contrastive learning and term weighting to enhance topic coherence and diversity, effectively refining topic representations. Similarly, NeuroMax \cite{pham2024neuromax} enhances NTMs by maximizing mutual information between topics and enforcing structured topic regularization to improve coherence.

Despite these advancements, NTMs still face challenges when applied in cross-domain or low-resource settings. Most models assume an abundance of training data and struggle with domain adaptation, where vocabulary shifts and distributional changes lead to misaligned topic representations. Although recent works, such as prompt-based NTMs \cite{pham2023topicgpt} and LLM-driven context expansion topic models \cite{akash2024enhancing}, attempt to leverage external knowledge sources for more robust topic discovery, they do not explicitly address how to transfer topic knowledge across domains with varying data distributions. As a result, current NTMs often fail to generalize effectively beyond their training domains, necessitating new approaches that can balance domain-invariant knowledge transfer with domain-specific adaptability.

\subsection{Low-Resource Topic Modeling}
Low-resource topic modeling has been explored through meta-learning and embedding-based adaptations, each with inherent limitations. Few-shot approaches \cite{iwata2021few} attempt to learn task-specific priors for generalization from limited samples.However, their rigid probabilistic assumptions prevent them from effectively capturing the contextual variations and subtle topic differences that arise in new domains. Embedding-based Neural Topic Models (NTMs) \cite{duan2022bayesian,duan2021sawtooth} improve generalization by leveraging pre-trained word embeddings, allowing topic discovery beyond simple word co-occurrence patterns. However, their reliance on static representations limits their adaptability in domains where word semantics shift significantly across contexts.

To further address data scarcity, Meta-CETM \cite{xu2024context} adapts pre-trained contextual embeddings to improve topic modeling in low-resource domains. However, it does not explicitly align topic distributions between source and target domains, relying instead on implicit adaptation through embedding refinements. This approach assumes that target-domain context provides sufficient information for effective adaptation, but in domains with sparse or highly specialized terminology, the model can overfit to limited contextual signals, leading to unstable topic representations and poor generalization. On the other hand, FASTopic \cite{wu2024fastopic} adopts a fully pre-trained transformer-based topic model, bypassing the need for fine-tuning on target-domain data. While this approach improves efficiency and avoids overfitting to small target datasets, it assumes that source-domain knowledge is universally applicable.

\section{Proposed Methodology}
\label{sec:proposed_model}

In this section, we formally define the problem of domain adaptation for low-resource topic modeling and establish the theoretical foundation for our approach. We first derive generalization bounds to quantify the conditions for effective knowledge transfer from a high-resource source domain to a low-resource target domain. Leveraging these insights, we then introduce DALTA (Domain-Aligned Latent Topic Adaptation), which aligns latent representations while preserving domain-specific structure.

\begin{problem}[Domain Adaptation for Low-resource Topic Modeling]
Let $\mathcal{X} \subseteq \mathbb{R}^d$ represent a $d$-dimensional document data space, where $\mathcal{X} = \{\mathbf{x}_1, \mathbf{x}_2, \ldots, \mathbf{x}_n\}$ is the set of document representations (e.g., bag-of-words and/or embeddings) with marginal distribution $p(\mathcal{X})$. The \textbf{source domain} is defined as $(\mathcal{X}_S, p(\mathcal{X}_S))$, and the \textbf{target domain} is defined as $(\mathcal{X}_T, p(\mathcal{X}_T))$, where $\mathcal{X}_S \neq \mathcal{X}_T$ (e.g., different vocabularies or structures) and $p(\mathcal{X}_S) \neq p(\mathcal{X}_T)$ (e.g., distributional shifts). The topic spaces $\alpha_S$ and $\alpha_T$ represent the domain-specific latent topic proportions for documents in the source and target domains, respectively, while the topic-word distributions $\beta_S$ and $\beta_T$ capture the relationship between topics and words in each domain. Domain adaptation seeks to infer meaningful topics for the low-resourced target domain $\mathcal{X}_T$ by leveraging knowledge from the source domain $\mathcal{X}_S$, while addressing challenges such as vocabulary mismatches, topic shifts, and distributional differences between the domains.
\end{problem}

\subsection{Generalization Bound}

% To better understand the challenges of cross-domain topic modeling, consider the following scenario: imagine we are tasked with analyzing research articles from two distinct fields—computer science (source domain) and medical science (target domain). The computer science domain has a vast number of well-labeled articles, while the medical science domain has only a limited set of documents. Our goal is to build a topic model that performs well on the medical documents, despite having limited data. This situation raises a critical question: can we leverage the abundant data from the source domain to improve topic modeling in the target domain, and how can we theoretically guarantee the effectiveness of such transfer?

To better understand the challenges of domain adaption in low-resource topic modeling, consider a scenario where we aim to analyze documents from two distinct domains: computer science (source domain) and medical science (target domain). While the source domain provides an abundance of documents, the target domain suffers from data scarcity. Our goal is to develop a topic model that generalizes well to the target domain despite the limited availability of data. This scenario raises fundamental questions: can we effectively leverage the abundant source domain data to improve topic modeling in the target domain, and how can we theoretically guarantee the effectiveness of such transfer?

To address these questions, we derive a finite-sample generalization bound for domain adaptation in topic modeling, offering insights into the factors that influence knowledge transfer between domains. Our theoretical framework is grounded in the concept of generalization error, which measures how well a model trained on finite samples can perform on unseen data. Specifically, we aim to bound the target domain error $\epsilon_T(h)$ in terms of observable quantities from both the source and target domains.

% \begin{definition}[Hypothesis Class]
% A hypothesis class \( \mathcal{H} \) is a set of functions \( h: \mathcal{Z} \to [0,1]^{|\mathbb{V}|} \), where \( \mathbb{V} \) represents the vocabulary space, \( \mathcal{Z} \) is the latent semantic space obtained through an encoder \( q_\phi \), and \( f_d \) denotes the domain-specific decoder for domain \( d \in \{S, T\} \) (source and target domains).
% \end{definition}

\begin{theorem} [Generalization bound]
\label{theorem1}
% Let  $h \in \mathcal{H}$  be a hypothesis from a hypothesis class  $\mathcal{H}$ , where  $h: \mathcal{Z} \to [0,1]^{|\mathbb{V}|}$  maps from a latent semantic space  $\mathcal{Z}$  to a probability distribution over a vocabulary  $\mathbb{V}$. \( f_S \) and \( f_T \) are the optimal functions mapping latent representations to reconstructed outputs for the source and target domains, respectively. Then, for every \( h \in \mathcal{H} \) and for any \( \delta > 0 \), with probability at least \( 1 - \delta \) over the choice of the source and target samples of sizes \( n_S \) and \( n_T \), the following inequality holds:

Let $h \in \mathcal{H}$ be a hypothesis from a hypothesis class $\mathcal{H}$, where $h: \mathcal{Z} \to [0,1]^{|\mathcal{V}|}$ maps from a latent semantic space $\mathcal{Z}$ to a probability distribution over a vocabulary $\mathcal{V}$. Let $f_S$ and $f_T$ be the optimal functions mapping latent representations to reconstructed outputs for the source and target domains, respectively. Define $p_S = \frac{n_S}{n_S + n_T}$ as the proportion of source samples and $p_T = \frac{n_T}{n_S + n_T}$ as the proportion of target samples. Then, for every $h \in \mathcal{H}$ and for any $\delta > 0$, with probability at least $1 - \delta$ over the choice of the source and target samples of sizes $n_S$ and $n_T$, the target domain error is bounded by:

\resizebox{\columnwidth}{!}{%
\begin{minipage}{\columnwidth}
    \begin{align*}
    \epsilon_T(h) &\leq p_T\cdot\hat{\epsilon}_T(h) + p_S\cdot\hat{\epsilon}_S(h) \\&+ \frac{1}{\lambda}\text{KL}(q \| p)
     +p_S\cdot(d_\mathcal{H}(\mathcal{D}_S(Z), \mathcal{D}_T(Z)) \\&+ p_S\cdot\min \left\{ \mathbb{E}_{S} \big[| f_S - f_T | \big], \, \mathbb{E}_{T} \big[ | f_S - f_T | \big] \right\}  \\&+
     \mathcal{O}\Big(K_\phi K_\theta \Delta + \frac{1}{\lambda} \log \frac{1}{\delta} + \frac{\lambda \Delta^2}{n_S + n_T}\Big)
    \end{align*}
\end{minipage}%
}
\end{theorem}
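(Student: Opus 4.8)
The plan is to combine the classical domain-adaptation decomposition of Ben-David et al. with a PAC-Bayesian concentration argument adapted to the stochastic encoder--decoder structure of a neural topic model. First I would establish the population-level domain-adaptation inequality via the $\alpha$-weighted error trick with $\alpha = p_T$: writing the weighted risk $\epsilon_\alpha(h) = p_T\,\epsilon_T(h) + p_S\,\epsilon_S(h)$, the target risk satisfies $\epsilon_T(h) \le \epsilon_\alpha(h) + p_S\bigl(\tfrac{1}{2} d_{\mathcal{H}}(\mathcal{D}_S(Z),\mathcal{D}_T(Z)) + \lambda^\star\bigr)$, where the discrepancy is measured on the latent encodings $Z = \phi(\mathcal{X})$ and $\lambda^\star$ is the risk of the ideal joint hypothesis. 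The key point is to identify $\lambda^\star$ with the reconstruction gap between the optimal source and target decoders; bounding the error of any single hypothesis that must serve both domains by the cheaper of the two directions yields the term $p_S\cdot\min\{\mathbb{E}_{S}[|f_S-f_T|],\,\mathbb{E}_{T}[|f_S-f_T|]\}$.

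Second, I would pass from population risks to empirical risks through a Catoni-style PAC-Bayes bound at inverse temperature $\lambda$. For the posterior $q$ over hypotheses and prior $p$, the standard route---Markov's inequality applied to the exponential moment of the risk gap, followed by the Donsker--Varadhan change of measure---gives, with probability at least $1-\delta$, a bound of the form $\mathbb{E}_{h\sim q}[\epsilon_\alpha(h)] \le \mathbb{E}_{h\sim q}[\hat{\epsilon}_\alpha(h)] + \tfrac{1}{\lambda}\bigl(\text{KL}(q\|p) + \log\tfrac{1}{\delta}\bigr) + \tfrac{\lambda C}{n_S+n_T}$, where $C$ controls the variance proxy of the per-sample loss. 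This single inequality produces the empirical terms $p_T\,\hat{\epsilon}_T(h) + p_S\,\hat{\epsilon}_S(h)$, the complexity term $\tfrac{1}{\lambda}\text{KL}(q\|p)$, the confidence term $\tfrac{1}{\lambda}\log\tfrac{1}{\delta}$, and the variance term $\tfrac{\lambda\Delta^2}{n_S+n_T}$ once $C$ is expressed through the loss range $\Delta$.

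Third, I would control the residual gap arising because the latent $\mathcal{H}$-divergence is realized through a composition of learned maps rather than observed directly. Invoking Lipschitz continuity of the encoder $\phi$ (constant $K_\phi$) and decoder $\theta$ (constant $K_\theta$), a perturbation of size $\Delta$ in the latent representation propagates to the reconstruction with multiplicative factor $K_\phi K_\theta$, accounting for the $\mathcal{O}(K_\phi K_\theta \Delta)$ term. Collecting the dominant contributions and absorbing lower-order cross terms into the big-$\mathcal{O}$ then yields the stated inequality.

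The hard part will be reconciling the two frameworks coherently: the Ben-David bound is stated for deterministic hypotheses and fixed distributions, whereas the PAC-Bayes step randomizes over $h$ and the latent discrepancy is itself defined on distributions $\mathcal{D}_S(Z),\mathcal{D}_T(Z)$ pushed forward through the same stochastic encoder being learned. Care is needed so that the divergence and $\lambda^\star$ terms survive the change-of-measure step with the correct $p_S$ weighting, and so that a single temperature $\lambda$ can be chosen to trade off the $\tfrac{1}{\lambda}\text{KL}$ and $\tfrac{\lambda\Delta^2}{n_S+n_T}$ contributions without inflating the others. I expect the cleanest route is to prove the domain-adaptation inequality pointwise in $h$, then integrate it against $q$ and apply PAC-Bayes to the weighted empirical risk, keeping the divergence and joint-hypothesis terms as deterministic additive constants.
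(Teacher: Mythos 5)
Your proposal is correct in outline and arrives at exactly the terms of the theorem, but it takes a somewhat different route from the paper. The paper never forms the $\alpha$-weighted risk: it (i) quotes the unweighted Ben-David inequality (its Lemma~1) to bound $\epsilon_T(h)$ by $\epsilon_S(h)$ plus the latent divergence and the $\min\left\{\mathbb{E}_{S}\big[|f_S-f_T|\big],\,\mathbb{E}_{T}\big[|f_S-f_T|\big]\right\}$ term, (ii) quotes a ready-made PAC-Bayesian reconstruction guarantee for VAEs (Mbacke et al., its Lemma~2)---which already packages $\frac{1}{\lambda}\text{KL}(q\|p)$, $K_\phi K_\theta\Delta$, $\frac{1}{\lambda}\log\frac{1}{\delta}$ and $\frac{\lambda\Delta^2}{8n}$---applying it once to the source domain (giving its Lemma~3) and once directly to the target, and (iii) combines the two high-probability bounds as a convex combination with weights $p_T,p_S$, which is where the $p_S$ factors on the divergence and min terms and the $\frac{\lambda\Delta^2}{4(n_S+n_T)}$ variance term arise. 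Your plan reverses the order: weighted population risk first (with $\alpha=p_T$, so $\hat{\epsilon}_\alpha$ is the uniform average over the pooled sample of size $n_S+n_T$), then a single Catoni-style PAC-Bayes application. This is algebraically equivalent and arguably cleaner---you spend the confidence budget on one event rather than two, and you get the $\frac{\lambda\Delta^2}{n_S+n_T}$ rate directly---but what you pay for is your third step: the paper inherits the $K_\phi K_\theta\Delta$ term by citation, whereas you must actually derive it, and your Lipschitz-propagation sketch is the least developed part of the plan (in the cited result this term emerges from the PAC-Bayes treatment of the data-dependent posterior, not as a separate residual correction). Two touch-ups: $\lambda^\star$ in the weighted Ben-David bound is the ideal joint risk, not literally the min term, so you should instead bound $|\epsilon_S(h)-\epsilon_T(h)|$ directly, as in the paper's Lemma~1, to obtain the min term with the correct $p_S$ weight; and your concern about the divergence being defined on distributions pushed through the learned stochastic encoder is legitimate but equally unaddressed by the paper, which simply instantiates Lemma~1 with $\mathcal{D}_S(Z),\mathcal{D}_T(Z)$.
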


The generalization bound from Theorem \ref{theorem1} highlights key factors governing domain adaptation in topic modeling. The first two terms capture empirical reconstruction errors in the source and target domains, emphasizing the need for latent topic representations that effectively reconstruct documents across both. The third term, the KL divergence $\text{KL}(q \| p)$, regularizes the model by constraining learned latent representations to remain close to the prior, thus reducing overfitting. The fourth term measures the discrepancy between source and target latent representations, with lower values indicating better domain alignment and improved transferability of learned topics. The fifth term quantifies the divergence between optimal reconstruction functions, reflecting how well topic-word distributions align across domains. Finally, the last term, involving higher-order complexity measures, accounts for the model’s capacity and the statistical fluctuations due to finite sample sizes.
% \subsubsection*{Algorithm Design}

\begin{figure}[!tb]
\centering
\includegraphics[width=0.9\linewidth]{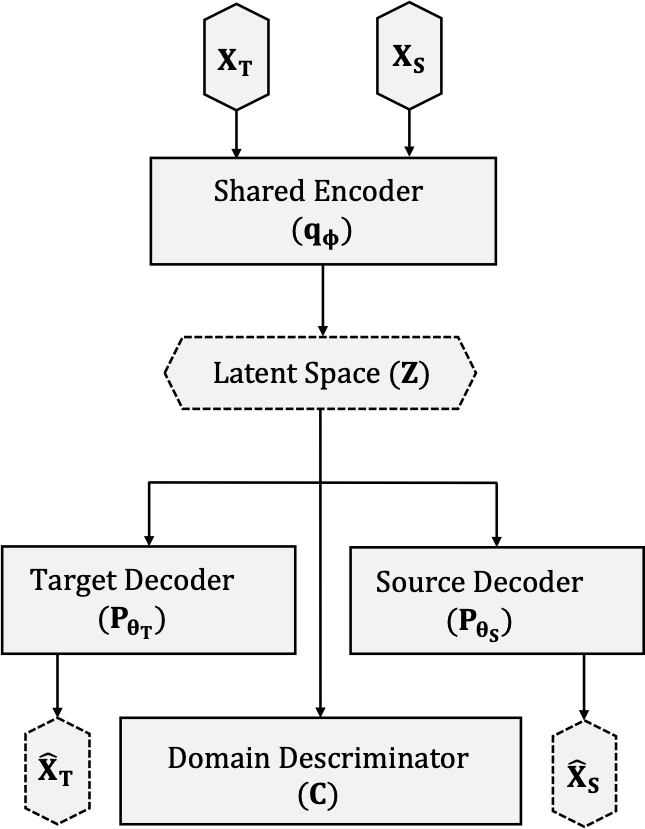}
\caption{DALTA Framework}
\label{fig:dalta}
\vspace{-5mm}
\end{figure}

\subsection{Proposed Model: DALTA}

Motivated by the generalization error bounds in Theorem \ref{theorem1}, we introduce DALTA (Domain-Aligned Latent Topic Adaptation), a bound minimization framework (shown in Figure~\ref{fig:dalta}) aimed at improving domain adaptation in low-resource topic modeling. The key idea behind DALTA is to focus on optimizing the components of the bound that directly influence model performance in the target domain. Since the last term of the bound reflects model complexity and sample variance—factors that are less tractable—we prioritize minimizing the first five terms, which capture reconstruction quality, representation alignment, and regularization.

To achieve this, DALTA minimizes the empirical reconstruction errors using source and target data, promoting accurate document reconstruction. It encourages alignment between source and target latent spaces to reduce domain discrepancies, and fosters consistency in reconstruction functions to ensure similar performance across domains. Additionally, the KL divergence term is employed as a regularizer to maintain a stable latent space structure, supporting generalization while mitigating overfitting. This targeted optimization strategy enables DALTA to effectively adapt topic models across diverse domains.

To learn domain-invariant information, we employ a shared encoder $q_\phi: \mathcal{X} \to \mathcal{Z}$  that maps documents $X$ from both source and target domains into a common latent space $Z$. To achieve domain-invariant representations, the encoder aims to minimize the discrepancy between the source and target latent distributions. We adopt an adversarial training approach as \cite{ganin2016domain}, introducing a domain discriminator $C$ that distinguishes between source and target representations, while the encoder $q_\phi$  is optimized to fool $C$. The adversarial objective is formulated as:

\begin{align}
\label{eq:l_adv}
\min_{q_\phi} \max_{C} \, \mathcal{L}_{adv} = \mathbb{E}_{\mathcal{X}_S}[\log C(q_\phi(X_S))] \nonumber \\ + \mathbb{E}_{\mathcal{X}_T}[\log(1 - C(q_\phi(X_T)))]    
\end{align}

This min-max optimization ensures that the encoder learns domain-invariant features by reducing the discriminator’s ability to differentiate between the two domains. To formally characterize this relationship, we express the connection between the domain classifier’s performance and the divergence between the source and target domains as follows:

\begin{proposition}
Let \( q_\phi: \mathcal{X} \to \mathcal{Z} \) be a shared encoder mapping documents from the source domain \( \mathcal{X}_S \) and target domain \( \mathcal{X}_T \) into a common latent space \( \mathcal{Z} \). The \(\mathcal{H}\)-divergence between the source and target latent distributions is given by:
\[
d_{\mathcal{H}}(q_\phi(\mathcal{X}_S), q_\phi(\mathcal{X}_T)) = 2 \left(1 - 2 \epsilon_C^* \right),
\]
where \( \epsilon_C^* \) is the classification error of the optimal domain classifier \( C^* \).
\end{proposition}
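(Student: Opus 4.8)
The plan is to invoke the standard definition of $\mathcal{H}$-divergence from \cite{ben2010theory} and connect it to the optimal domain classifier via a direct algebraic computation. Writing $\mathcal{D}_S = q_\phi(\mathcal{X}_S)$ and $\mathcal{D}_T = q_\phi(\mathcal{X}_T)$ for the induced latent distributions, I would start from
\[
d_{\mathcal{H}}(\mathcal{D}_S, \mathcal{D}_T) = 2 \sup_{h \in \mathcal{H}} \bigl| \Pr_{\mathcal{D}_S}[h=1] - \Pr_{\mathcal{D}_T}[h=1] \bigr|,
\]
where $\mathcal{H}$ is the symmetric class of binary domain classifiers over $\mathcal{Z}$.

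Next, I would recast the adversarial objective in Eq.~\eqref{eq:l_adv} as a balanced binary classification task: draw a latent code from the source with probability $\tfrac12$ (label $1$) and from the target with probability $\tfrac12$ (label $0$). A classifier $h$ then incurs error
\[
\epsilon_C(h) = \tfrac12 \Pr_{\mathcal{D}_S}[h=0] + \tfrac12 \Pr_{\mathcal{D}_T}[h=1],
\]
which rearranges to $\epsilon_C(h) = \tfrac12 - \tfrac12 \bigl(\Pr_{\mathcal{D}_S}[h=1] - \Pr_{\mathcal{D}_T}[h=1]\bigr)$. Minimizing $\epsilon_C(h)$ over $h$ is thus equivalent to maximizing the signed probability gap.

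Now, here is the crux: since $\mathcal{H}$ is closed under negation, $\sup_h \bigl(\Pr_{\mathcal{D}_S}[h=1] - \Pr_{\mathcal{D}_T}[h=1]\bigr)$ equals $\sup_h \bigl|\Pr_{\mathcal{D}_S}[h=1] - \Pr_{\mathcal{D}_T}[h=1]\bigr| = \tfrac12 d_{\mathcal{H}}(\mathcal{D}_S, \mathcal{D}_T)$. Substituting into $\epsilon_C^* = \min_h \epsilon_C(h)$ gives $\epsilon_C^* = \tfrac12 - \tfrac14 d_{\mathcal{H}}(\mathcal{D}_S, \mathcal{D}_T)$, and solving for the divergence yields the claimed $d_{\mathcal{H}}(\mathcal{D}_S, \mathcal{D}_T) = 2(1 - 2\epsilon_C^*)$.

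The algebra is routine; the step I expect to be the main obstacle, and would justify most carefully, is the symmetry assumption on $\mathcal{H}$ that lets me pass from the one-sided supremum to the absolute value, since without closure under negation the equality weakens to an inequality. I would also make explicit the balanced-prior assumption ($p_S = p_T = \tfrac12$ in the classification problem): an unbalanced mixture would rescale the probability gap and change the constant, so stating it is necessary to obtain the clean form $2(1 - 2\epsilon_C^*)$.
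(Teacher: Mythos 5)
Your proof is correct, but it takes a genuinely different route from the paper's. You argue entirely at the level of the hypothesis class: starting from the variational definition $d_{\mathcal{H}}(\mathcal{D}_S,\mathcal{D}_T) = 2\sup_{h\in\mathcal{H}}\bigl|\Pr_{\mathcal{D}_S}[h=1]-\Pr_{\mathcal{D}_T}[h=1]\bigr|$, rewriting the balanced classification error as $\tfrac12$ minus half the signed probability gap, and using closure under negation to identify the one-sided supremum with the absolute one --- essentially the classical argument of Ben-David et al.\ relating empirical $\mathcal{H}$-divergence to the minimal domain-classification error, as used in DANN. The paper instead argues at the density level: it writes the Bayes-optimal discriminator explicitly as $C^*(z) = p_S(z)/(p_S(z)+p_T(z))$, computes its error as $\epsilon_{C^*} = \tfrac12\int \min(p_S(z),p_T(z))\,dz$, applies the identity $\min(a,b) = \tfrac{a+b-|a-b|}{2}$ to obtain $\epsilon_{C^*} = \tfrac12 - \tfrac12\,\mathrm{TV}(q_\phi(\mathcal{X}_S),q_\phi(\mathcal{X}_T))$, and then invokes $d_{\mathcal{H}} = 2\,\mathrm{TV}$. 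Each derivation buys something the other lacks. Yours is more general and more explicit about its hypotheses: it holds for any symmetric class $\mathcal{H}$, with $\epsilon_C^*$ the best error \emph{within} that class, and you correctly flag that both negation-closure and the balanced-prior assumption ($p_S=p_T=\tfrac12$ in the classification mixture) are load-bearing --- drop either and only an inequality survives. The paper's version produces the closed form of the optimal discriminator, which matches the sigmoid discriminator actually trained in the adversarial objective \eqref{eq:l_adv}, and makes contact with total variation; but its final step $d_{\mathcal{H}} = 2\,\mathrm{TV}$ is an equality only when $\mathcal{H}$ is in effect the class of all measurable classifiers (for a restricted discriminator class one has only $d_{\mathcal{H}} \le 2\,\mathrm{TV}$), and the paper leaves this, together with the balanced mixing of domains, implicit. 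In short, your proof makes explicit exactly the assumptions the paper's proof uses silently, while the paper's makes explicit the Bayes classifier that your argument leaves abstract; the two coincide precisely when $\mathcal{H}$ is rich enough to contain the Bayes decision regions.
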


As  $\epsilon_{C} \to 0.5$  (random guessing by  $C^*$),  $d_{\mathcal{H}}(q_\phi(\mathcal{X}_S), q_\phi(\mathcal{X}_T)) \to 0$ , indicating perfect alignment between  $q_\phi(\mathcal{X}_S)$  and  $q_\phi(\mathcal{X}_T)$. This process directly contributes to minimizing the fourth term in the generalization bound.

To capture domain-specific characteristics, DALTA incorporates decoders  \( p_{\theta_S}(X_S|Z) \) and \( p_{\theta_T}(X_T|Z) \). Each decoder infers document-topic distributions— $\alpha_S$ and $\alpha_T$ for the source and target domains—by mapping to topic proportions, which are then used to reconstruct documents. This intermediate step ensures that latent representations capture domain-relevant semantics. The number of topics can vary between domains and is independent of the latent space size, providing flexibility to handle diverse topic granularities. The reconstruction objective is defined as:
\begin{align}
\label{eq:l_rec}
\min_{q_\phi, p_{\theta_S}, p_{\theta_T}} \mathcal{L}_{rec} = - \mathbb{E}_{q_\phi(Z|X_S)}[\log p_{\theta_S}(X_S|Z)] \nonumber \\ - \mathbb{E}_{q_\phi(Z|X_T)}[\log p_{\theta_T}(X_T|Z)],
\end{align}
which corresponds to minimizing the first two terms in the generalization bound.

To enhance the consistency of learned representations across domains, DALTA introduces a consistency loss that reduces the divergence between the optimal reconstruction functions of the source and target decoders. This is achieved by enforcing similarity between the outputs of the domain-specific decoders when processing aligned latent representations:
\resizebox{\columnwidth}{!}{%
\begin{minipage}{\columnwidth}
\begin{align}
\label{eq:l_cons}
\min_{q_\phi, p_{\theta_S}, p_{\theta_T}} &\mathcal{L}_{cons} = \mathbb{E}_{q_\phi(Z|X_S)} [\| p_{\theta_S}(Z) - p_{\theta_T}(Z) \|^2] \nonumber \\ & + \mathbb{E}_{q_\phi(Z|X_T)} [\| p_{\theta_S}(Z) - p_{\theta_T}(Z) \|^2],
\end{align}
\end{minipage}%
}
which directly contributes to minimizing the fifth term in the generalization bound by promoting functional alignment between source and target domains.

To prevent overfitting and maintain a smooth latent space structure, DALTA employs a KL divergence regularizer:

\begin{align}
\label{eq:l_kl}
\min_{q_\phi} \mathcal{L}_{KL} = D_{KL}(q_\phi(Z|X) \| p(Z)),    
\end{align}
where \( p(Z) \) is typically chosen as a standard Gaussian prior. This regularization helps control the model's complexity, addressing the third term of the generalization bound and ensuring robust generalization.

Bringing these components together, the overall objective of DALTA is formulated as:
\begin{align}
\label{eq:l_tot}
\min_{q_\phi, p_{\theta_S}, p_{\theta_T}} \max_{C} \mathcal{L}_{DALTA} = \mathcal{L}_{rec} + \omega_{adv} \mathcal{L}_{adv} \nonumber \\ + \omega_{cons} \mathcal{L}_{cons} + \omega_{KL} \mathcal{L}_{KL},    
\end{align}
where \( \omega_{adv} \), \( \omega_{cons} \), and \( \omega_{KL} \) are hyperparameters that balance the contributions of the adversarial, consistency, and regularization losses. This integrated framework enables DALTA to effectively balance domain-invariant and domain-specific learning, facilitating robust and adaptable cross-domain topic modeling. The detailed training procedure of DALTA is outlined in Algorithm~\ref{alg:dalta}.

\begin{algorithm}[H]
\caption{Learning DALTA}
\label{alg:dalta}
\begin{algorithmic}[1]
\Require Source domain data $\mathcal{X}_S$, target domain data $\mathcal{X}_T$, learning rates, domain-weight parameter $\mu$, trade-off parameters $\omega_{adv}, \omega_{cons}, \omega_{KL}$
\State Initialize encoder parameters $\phi$, decoder parameters $\theta_S, \theta_T$, and domain discriminator $C$
\While{not converged}
    \State Sample mini-batch from $\mathcal{X}_S$ and $\mathcal{X}_T$
    \State Encode documents: $Z_S = q_\phi(X_S)$, $Z_T = q_\phi(X_T)$
    \State Compute losses: $\mathcal{L}_{rec}$ \eqref{eq:l_rec}, $\mathcal{L}_{adv}$ \eqref{eq:l_adv}, $\mathcal{L}_{cons}$ \eqref{eq:l_cons} and $\mathcal{L}_{KL}$ \eqref{eq:l_kl}
    \State Compute total loss: $\mathcal{L}_{DALTA}$ \eqref{eq:l_tot}
    \State Update $\phi, \theta_S, \theta_T$ to minimize $\mathcal{L}_{DALTA}$
    \State Update discriminator $C$ to maximize $\mathcal{L}_{adv}$
\EndWhile
\Return Optimized parameters $\phi, \theta_S, \theta_T, C$
\end{algorithmic}
\end{algorithm}

\section{Experiments}
\label{sec:experiment}
In this section, we conduct a comprehensive set of experiments in low-resource settings to evaluate topic quality, classification accuracy, document clustering performance, and the impact of individual loss components. We also include a qualitative analysis of topic interpretability and a case study on source domain selection, with extended results provided in the appendix.

\subsection{Experiment Setup}

{\flushleft \textbf{Datasets.}} 
We evaluate cross-domain adaptation in low-resource topic modeling using four diverse target datasets, each sampled to 1,000 instances per domain: (1) \textbf{20 Newsgroups}\footnote{\url{https://www.kaggle.com/datasets/crawford/20-newsgroups}}, where we use \textit{Science} and \textit{Religion} to assess adaptation between technical and belief-based topics; (2) \textbf{Drug Review}\footnote{\url{https://www.kaggle.com/datasets/jessicali9530/kuc-hackathon-winter-2018}}, consisting of patient reviews on two drugs named \textit{Norethindrone} and \textit{Norgestimate} to evaluate adaptation within medical text; (3) Yelp Reviews\footnote{\url{https://www.kaggle.com/datasets/omkarsabnis/yelp-reviews-dataset}}, representing informal, sentiment-rich business reviews; and (4) \textbf{SMS Spam Collection}\footnote{\url{https://archive.ics.uci.edu/dataset/228/sms+spam+collection}}, containing labeled \textit{spam} and \textit{ham} messages to test adaptation in short-text data with high lexical variability.

For the \textbf{source dataset}, we use the \textbf{AG News corpus}\footnote{\url{https://www.kaggle.com/datasets/amananandrai/ag-news-classification-dataset}}, a large-scale news dataset covering \textit{World}, \textit{Sports}, \textit{Business}, and \textit{Science/Technology} topics. As a high-resource domain, it provides broad topic coverage, allowing models to learn transferable topic representations for adaptation to low-resource target domains.

{\flushleft \textbf{Baselines.}}
We compare our models with several established baselines: (1) \textbf{LDA} \cite{blei2003latent} models documents as mixtures of topics, each represented by a distribution over words. (2) \textbf{ProdLDA} \cite{srivastava2017autoencoding} employs variational autoencoders to infer document-topic distributions. (3) \textbf{ETM} \cite{dieng2020topic} incorporates word embeddings to enhance topic coherence. (4) \textbf{CTM} \cite{bianchi2020pre} integrates contextualized document embeddings with bag-of-words representations. (5) \textbf{ECRTM} \cite{wu2023effective} enforces distinct word embedding clusters for each topic to prevent topic collapse. (6) \textbf{DeTiME} \cite{xu-etal-2023-detime} utilizes encoder-decoder-based large language models to generate semantically coherent topic embeddings. (7) \textbf{Meta-CETM} \cite{xu2024context} adapts word embeddings using target domain context for low-resource settings. (8) \textbf{FASTopic} \cite{wu2024fastopic} models semantic relations among documents, topics, and words through a dual semantic-relation reconstruction paradigm.

For implementation details and computing infrastructure, please refer to Appendices \ref{sec:imp_details} and \ref{sec:compute}.

\subsection{Topic Quality Evaluation} 

% Please add the following required packages to your document preamble:
% \usepackage{booktabs}
% \usepackage{multirow}
% \usepackage[table,xcdraw]{xcolor}
% Beamer presentation requires \usepackage{colortbl} instead of \usepackage[table,xcdraw]{xcolor}
\begin{table*}[!ht]
\resizebox{1.0\linewidth}{!}{%
\begin{tabular}{@{}lllll|llll|llll|llll|llll|llll@{}}
\toprule
\multicolumn{1}{c}{}                        & \multicolumn{8}{c}{Newsgroup}                                                                                                                                                                                                              & \multicolumn{8}{c}{Drug Review}                                                                                                                                                                                           & \multicolumn{4}{c}{}                                                                                        & \multicolumn{4}{c}{}                                                                                        \\
\multicolumn{1}{c}{}                        & \multicolumn{4}{c}{Science}                                                                                 & \multicolumn{4}{c}{Religion}                                                                                                        & \multicolumn{4}{c}{Norethindrone}                                                                           & \multicolumn{4}{c}{Norgestimate}                                                                            & \multicolumn{4}{c}{\multirow{-2}{*}{Yelp}}                                                                  & \multicolumn{4}{c}{\multirow{-2}{*}{\begin{tabular}[c]{@{}c@{}} SMS Spam \\ Collection\end{tabular}}}            \\ \cmidrule(lr){2-25}
\multicolumn{1}{c}{}                        & \multicolumn{2}{c}{k=10}                             & \multicolumn{2}{c}{k=20}                             & \multicolumn{2}{c}{k=10}                             & \multicolumn{2}{c}{k=20}                                                     & \multicolumn{2}{c}{k=10}                             & \multicolumn{2}{c}{k=20}                             & \multicolumn{2}{c}{k=10}                             & \multicolumn{2}{c}{k=20}                             & \multicolumn{2}{c}{k=10}                             & \multicolumn{2}{c}{k=20}                             & \multicolumn{2}{c}{k=10}                             & \multicolumn{2}{c}{k=20}                             \\
\multicolumn{1}{c}{\multirow{-4}{*}{Models}} & \multicolumn{1}{c}{$C_V$} & \multicolumn{1}{c}{$TD$} & \multicolumn{1}{c}{$C_V$} & \multicolumn{1}{c}{$TD$} & \multicolumn{1}{c}{$C_V$} & \multicolumn{1}{c}{$TD$} & \multicolumn{1}{c}{$C_V$} & \multicolumn{1}{c}{$TD$}                         & \multicolumn{1}{c}{$C_V$} & \multicolumn{1}{c}{$TD$} & \multicolumn{1}{c}{$C_V$} & \multicolumn{1}{c}{$TD$} & \multicolumn{1}{c}{$C_V$} & \multicolumn{1}{c}{$TD$} & \multicolumn{1}{c}{$C_V$} & \multicolumn{1}{c}{$TD$} & \multicolumn{1}{c}{$C_V$} & \multicolumn{1}{c}{$TD$} & \multicolumn{1}{c}{$C_V$} & \multicolumn{1}{c}{$TD$} & \multicolumn{1}{c}{$C_V$} & \multicolumn{1}{c}{$TD$} & \multicolumn{1}{c}{$C_V$} & \multicolumn{1}{c}{$TD$} \\ \cmidrule(lr){2-25}
LDA                                         & 0.425                     & 0.696                    & 0.429                     & 0.564                    & 0.424                     & 0.588                    & 0.381                     & \multicolumn{1}{r}{{\color[HTML]{3B3B3B} 0.508}} & 0.439                     & 0.420                    & 0.444                     & 0.372                    & 0.461                     & 0.472                    & 0.457                     & 0.318                    & 0.394                     & 0.420                    & 0.398                     & 0.358                    & 0.351                     & 0.680                    & 0.391                     & 0.662                    \\
ProdLDA                                     & 0.410                     & 0.816                    & 0.417                     & 0.834                    & 0.422                     & 0.900                    & 0.390                     & 0.878                                            & 0.437                     & 0.796                    & 0.473                     & 0.616                    & 0.472                     & 0.720                    & 0.403                     & 0.662                    & 0.437                     & 0.772                    & 0.453                     & 0.834                    & 0.405                     & 0.828                    & 0.421                     & 0.708                    \\
ETM                                         & 0.469                     & 0.808                    & 0.408                     & 0.498                    & 0.422                     & 0.784                    & 0.406                     & 0.560                                            & 0.439                     & 0.516                    & 0.426                     & 0.304                    & 0.445                     & 0.492                    & 0.450                     & 0.314                    & 0.359                     & 0.688                    & 0.412                     & 0.518                    & 0.434                     & 0.632                    & 0.407                     & 0.336                    \\
CTM                                         & 0.476                     & 0.804                    & 0.431                     & 0.832                    & 0.407                     & 0.852                    & 0.422                     & 0.830                                            & 0.466                     & 0.792                    & 0.470                     & 0.694                    & 0.422                     & 0.724                    & 0.480                     & 0.594                    & 0.398                     & 0.768                    & 0.441                     & 0.746                    & 0.471                     & 0.848                    & 0.476                     & 0.732                    \\
ECRTM                                       & 0.391                     & 0.636                    & 0.427                     & 0.556                    & 0.410                     & 0.628                    & 0.420                     & 0.524                                            & 0.459                     & 0.632                    & 0.410                     & 0.860                    & 0.411                     & 0.596                    & 0.457                     & 0.798                    & 0.392                     & 0.728                    & 0.473                     & 0.472                    & 0.499                     & 0.829                    & 0.493                     & \textbf{0.821}           \\
DeTime                                      & 0.417                     & 0.808                    & 0.411                     & 0.844                    & 0.402                     & 0.900                    & 0.396                     & 0.874                                            & 0.355                     & 0.672                    & 0.341                     & 0.652                    & 0.380                     & 0.714                    & 0.345                     & 0.648                    & 0.371                     & 0.716                    & 0.374                     & 0.784                    & 0.378                     & 0.628                    & 0.382                     & 0.562                    \\
Meta-CETM                                   & 0.396                     & 0.831                    & 0.391                     & 0.891                    & 0.409                     & 0.873                    & 0.403                     & 0.899                                            & 0.493                     & 0.845                    & 0.530                     & 0.748                    & 0.426                     & 0.679                    & 0.417                     & 0.872                    & 0.406                     & 0.791                    & 0.437                     & 0.761                    & 0.452                     & 0.879                    & 0.423                     & 0.792                    \\
Fastopic                                    & 0.406                     & 0.829                    & 0.424                     & 0.905                    & 0.389                     & 0.881                    & 0.418                     & 0.900                                            & 0.517                     & 0.811                    & 0.490                     & \textbf{0.948}           & 0.413                     & 0.709                    & 0.414                     & 0.900                    & 0.440                     & 0.811                    & 0.454                     & 0.778                    & 0.464                     & 0.814                    & 0.485                     & 0.692                    \\ \midrule  
DALTA                                       & \textbf{0.493}            & \textbf{0.836}           & \textbf{0.451}            & \textbf{0.924}           & \textbf{0.431}            & \textbf{0.908}           & \textbf{0.451}            & \textbf{0.918}                                   & \textbf{0.582}            & \textbf{0.892}           & \textbf{0.571}            & 0.800                    & \textbf{0.484}            & \textbf{0.732}           & \textbf{0.483}            & \textbf{0.932}           & \textbf{0.448}            & \textbf{0.852}           & \textbf{0.516}            & \textbf{0.808}           & \textbf{0.503}            & \textbf{0.900}           & \textbf{0.505}            & 0.800   \\ \bottomrule                  
\end{tabular}}
\caption{Topic coherences ($C_V$) and diversity ($TD$) scores of topic words. $k$ denotes the number of topics. The best result in each case is shown in \textbf{bold}.}
\label{tab:coherence_ag_1000samples}
\end{table*}

{\flushleft \textbf{Evaluation Metrics.}}
For evaluating the quality of topics returned by each model, we use the following two different metrics-- (1) \textbf{$C_V$} \cite{wu2020short}: We use the widely used coherence score for topic modeling named $C_V$. It is a standard measure of the interpretability of topics. (2) \textbf{$TD$} \cite{nan2019topic}: Topic diversity (TD),
defined as the percentage of unique words in the top 10 words of all topics.

{\flushleft \textbf{Results and Discussions.}} Table \ref{tab:coherence_ag_1000samples} presents the topic coherence ($C_V$) and topic diversity ($TD$) scores for various models across multiple datasets and topic numbers. Higher coherence scores indicate better semantic consistency within topics, while higher diversity scores reflect broader coverage of unique words across topics. Our proposed DALTA model consistently achieves the highest coherence and diversity scores in almost all settings, demonstrating its ability to generate both semantically meaningful and diverse topics. Notably, DALTA outperforms all baselines in coherence across every dataset and setting, with particularly strong improvements in specific domains like Drug Review and Spam Collection. This suggests that DALTA effectively balances domain-invariant knowledge transfer while preserving domain-specific topic structures.

When comparing other models, ETM and CTM generally outperform LDA and ProdLDA by leveraging word embeddings and contextualized representations, which improve coherence. However, this often comes at the cost of topic diversity, leading to less comprehensive topic coverage. Meta-CETM and FASTopic perform well in more general datasets like Yelp, but struggle in niche settings, where DALTA proves to be more stable and robust. One interesting trend is that increasing the number of topics  ($k=10$ to $k=20$) tends to improve diversity, but does not always enhance coherence. For example, in datasets like Newsgroup and Yelp, raising the topic count does not necessarily result in more coherent topics. Despite this, DALTA maintains an optimal balance between coherence and diversity, making it particularly well-suited for low-resource topic modeling scenarios.

\subsection{Text Classification Evaluation}

Although topic models are not primarily designed for text classification, the document-topic distributions they generate can serve as useful features for classification tasks. To assess how well these representations capture meaningful document characteristics, we use them as input features for Support Vector Classification (SVC) \cite{cortes1995support} and Logistic Regression (LR) \cite{wright1995logistic}. We evaluate classification performance using 5-fold cross-validation, ensuring a robust comparison of different topic models based on their ability to generate informative and distinctive document representations.

% Please add the following required packages to your document preamble:
% \usepackage{booktabs}
% \usepackage{multirow}
% \usepackage[table,xcdraw]{xcolor}
% Beamer presentation requires \usepackage{colortbl} instead of \usepackage[table,xcdraw]{xcolor}
\begin{table*}[!ht]
\resizebox{1.0\linewidth}{!}{%
\begin{tabular}{@{}lllll|llll|llll|llll|llll|llll@{}}
\toprule
\multicolumn{1}{l}{\multirow{4}{*}{Models}} & \multicolumn{8}{c}{Newsgroup}                                                                                                                                                                                               & \multicolumn{8}{c}{Drug Review}                                                                                                                                                                           & \multicolumn{4}{c}{\multirow{2}{*}{Yelp}}                                                              & \multicolumn{4}{c}{\multirow{2}{*}{\begin{tabular}[c]{@{}c@{}}SMS Spam \\ Collection\end{tabular}}}     \\ 
\multicolumn{1}{c}{}                       & \multicolumn{4}{c}{Science}                                                                                    & \multicolumn{4}{c}{Religion}                                                                        & \multicolumn{4}{c}{Norethindrone}                                                                   & \multicolumn{4}{c}{Norgestimate}                                                                    & \multicolumn{4}{c}{}                                                                                   & \multicolumn{4}{c}{}                                                                                \\ \cmidrule(lr){2-25}
\multicolumn{1}{c}{}                       & \multicolumn{2}{c}{k=10}                                    & \multicolumn{2}{c}{k=20}                         & \multicolumn{2}{c}{k=10}                         & \multicolumn{2}{c}{k=20}                         & \multicolumn{2}{c}{k=10}                         & \multicolumn{2}{c}{k=20}                         & \multicolumn{2}{c}{k=10}                         & \multicolumn{2}{c}{k=20}                         & \multicolumn{2}{c}{k=10}                         & \multicolumn{2}{c}{k=20}                            & \multicolumn{2}{c}{k=10}                         & \multicolumn{2}{c}{k=20}                         \\ 
\multicolumn{1}{c}{}                       & \multicolumn{1}{c}{SVC}            & \multicolumn{1}{c}{LR} & \multicolumn{1}{c}{SVC} & \multicolumn{1}{c}{LR} & \multicolumn{1}{c}{SVC} & \multicolumn{1}{c}{LR} & \multicolumn{1}{c}{SVC} & \multicolumn{1}{c}{LR} & \multicolumn{1}{c}{SVC} & \multicolumn{1}{c}{LR} & \multicolumn{1}{c}{SVC} & \multicolumn{1}{c}{LR} & \multicolumn{1}{c}{SVC} & \multicolumn{1}{c}{LR} & \multicolumn{1}{c}{SVC} & \multicolumn{1}{c}{LR} & \multicolumn{1}{c}{SVC} & \multicolumn{1}{c}{LR} & \multicolumn{1}{c}{SVC} & \multicolumn{1}{c}{LR}    & \multicolumn{1}{c}{SVC} & \multicolumn{1}{c}{LR} & \multicolumn{1}{c}{SVC} & \multicolumn{1}{c}{LR} \\ \cmidrule(lr){2-25}
LDA                                        & 0.575                              & 0.588                  & 0.540                   & 0.547                  & 0.505                   & 0.513                  & 0.529                   & 0.522                  & 0.564                   & 0.564                  & 0.562                   & 0.578                  & 0.571                   & 0.557                  & 0.575                   & 0.600                  & 0.686                   & 0.686                  & 0.686                   & 0.686                     & 0.864                   & 0.872                  & 0.890                   & 0.897                  \\
ProdLDA                                    & 0.569                              & 0.635                  & 0.650                   & \textbf{0.716}         & 0.496                   & 0.549                  & 0.473                   & 0.505                  & 0.587                   & 0.569                  & 0.570                   & 0.540                  & 0.599                   & 0.620                  & 0.615                   & 0.628                  & 0.667                   & 0.656                  & 0.686                   & 0.686                     & 0.837                   & 0.960                  & 0.864                   & 0.864                  \\
ETM                                        & 0.297                              & 0.272                  & 0.246                   & 0.263                  & 0.404                   & 0.394                  & 0.404                   & 0.378                  & 0.451                   & 0.452                  & 0.433                   & 0.442                  & 0.493                   & 0.496                  & 0.495                   & 0.478                  & 0.686                   & 0.686                  & 0.686                   & 0.686                     & 0.864                   & 0.864                  & 0.864                   & 0.864                  \\
CTM                                        & 0.674                              & 0.651                  & 0.665                   & 0.702                  & 0.544                   & 0.557                  & 0.498                   & 0.516                  & 0.566                   & 0.552                  & 0.586                   & 0.576                  & 0.562                   & 0.563                  & 0.627                   & 0.628                  & 0.686                   & 0.686                  & 0.686                   & 0.686                     & 0.886                   & 0.987                  & 0.864                   & 0.864                  \\
ECRTM                                      & 0.534                              & 0.566                  & 0.591                   & 0.625                  & 0.521                   & 0.541                  & 0.516                   & 0.507                  & 0.584                   & 0.596                  & 0.592                   & 0.552                  & 0.542                   & 0.542                  & 0.630                   & 0.613                  & 0.686                   & 0.685                  & 0.686                   & 0.686                     & 0.881                   & 0.883                  & 0.870                   & 0.891                  \\
DeTime                                     & 0.254                              & 0.254                  & 0.254                   & 0.254                  & 0.411                   & 0.411                  & 0.410                   & 0.410                  & 0.464                   & 0.464                  & 0.464                   & 0.464                  & 0.503                   & 0.503                  & 0.503                   & 0.503                  & 0.686                   & 0.686                  & 0.686                   & 0.686                     & 0.864                   & 0.864                  & 0.864                   & 0.864                  \\
Meta-CETM                                  & 0.681                              & 0.729                  & 0.641                   & 0.682                  & 0.506                   & 0.492                  & 0.489                   & 0.515                  & 0.558                   & 0.519                  & 0.543                   & 0.561                  & 0.601                   & 0.591                  & 0.635                   & 0.636                  & 0.649                   & 0.686                  & 0.686       & 0.686            & 0.871                   & 0.872                  & 0.853                   & 0.881                  \\
Fastopic                                   & 0.678                              & 0.702                  & 0.667                   & 0.860                  & 0.510                   & 0.522                  & 0.519                   & 0.530                  & 0.564                   & 0.582                  & 0.572                   & 0.584                  & 0.616                   & 0.601                  & 0.625                   & 0.636                  & 0.685                   & 0.686                  & \textbf{0.717}          & \textbf{0.717}            & 0.869                   & 0.870                  & 0.869                   & 0.868                              \\ \midrule
DALTA                                      & \multicolumn{1}{r}{\textbf{0.698}} & \textbf{0.758}         & \textbf{0.685}          & 0.707                  & \textbf{0.529}          & \textbf{0.549}         & \textbf{0.528}          & \textbf{0.549}         & \textbf{0.598}          & \textbf{0.600}         & \textbf{0.598}          & \textbf{0.600}         & \textbf{0.646}          & \textbf{0.641}         & \textbf{0.646}          & \textbf{0.641}         & \textbf{0.686}          & \textbf{0.686}         & 0.686                   & 0.684 & \textbf{0.975}          & \textbf{0.978}         & \textbf{0.975}          & \textbf{0.978} \\ \bottomrule      
\end{tabular}%
}
\caption{Text classification accuracy over 5-fold cross-validation. The best results in each case are shown in \textbf{bold}.}
\label{tab:classification_ag_1000samples}
\end{table*}

{\flushleft \textbf{Results and Discussions.}} Table \ref{tab:classification_ag_1000samples} presents the text classification performance. Similar to the topic quality, our proposed DALTA model achieves the highest classification accuracy in most cases, particularly in niche datasets, where precise topic separation is crucial. In Spam Collection, DALTA achieves 0.975 (SVC) and 0.978 (LR), outperforming Meta-CETM and FASTopic. Similarly, in Drug Review, DALTA achieves the best classification accuracy for both drug categories, Norethindrone and Norgestimate, indicating that it effectively captures domain-specific vocabulary for improved classification.

Among baseline models, FASTopic and Meta-CETM perform well in general datasets but show inconsistent performance in niche domains, likely due to their reliance on embedding-based adaptation rather than direct domain alignment. CTM and ECRTM benefit from increasing the number of topics, particularly in Newsgroup (e.g., Science, Religion), where finer topic granularity improves classification. LDA and ProdLDA show competitive accuracy in broad domains but struggle in niche settings, where more specialized topic representations are needed.

\subsection{Clustering Performance Evaluation}
\label{sec:clustering_result}
% Please add the following required packages to your document preamble:
% \usepackage{booktabs}
% \usepackage{multirow}
% \usepackage[table,xcdraw]{xcolor}
% Beamer presentation requires \usepackage{colortbl} instead of \usepackage[table,xcdraw]{xcolor}
\begin{table*}[!ht]
\resizebox{1.0\linewidth}{!}{%
\begin{tabular}{@{}lllll|llll|llll|llll|llll|llll@{}}
\toprule
\multicolumn{1}{c}{\multirow{4}{*}{Models}} & \multicolumn{8}{c}{Newsgroup}                                                                                                                                                                                             & \multicolumn{8}{c}{Drug Review}                                                                                                                                                                                           & \multicolumn{4}{c}{\multirow{2}{*}{Yelp}}                                                                   & \multicolumn{4}{c}{\multirow{2}{*}{\begin{tabular}[c]{@{}c@{}}SMS Spam \\ Collection\end{tabular}}}         \\
\multicolumn{1}{c}{}                        & \multicolumn{4}{c}{Science}                                                                                 & \multicolumn{4}{c}{Religion}                                                                                & \multicolumn{4}{c}{Norethindrone}                                                                           & \multicolumn{4}{c}{Norgestimate}                                                                            & \multicolumn{4}{c}{}                                                                                        & \multicolumn{4}{c}{}                                                                                        \\ \cmidrule(lr){2-25}
\multicolumn{1}{c}{}                        & \multicolumn{2}{c}{k=10}                             & \multicolumn{2}{c}{k=20}                             & \multicolumn{2}{c}{k=10}                             & \multicolumn{2}{c}{k=20}                             & \multicolumn{2}{c}{k=10}                             & \multicolumn{2}{c}{k=20}                             & \multicolumn{2}{c}{k=10}                             & \multicolumn{2}{c}{k=20}                             & \multicolumn{2}{c}{k=10}                             & \multicolumn{2}{c}{k=20}                             & \multicolumn{2}{c}{k=10}                             & \multicolumn{2}{c}{k=20}                             \\
\multicolumn{1}{c}{}                        & \multicolumn{1}{c}{Purity} & \multicolumn{1}{c}{NMI} & \multicolumn{1}{c}{Purity} & \multicolumn{1}{c}{NMI} & \multicolumn{1}{c}{Purity} & \multicolumn{1}{c}{NMI} & \multicolumn{1}{c}{Purity} & \multicolumn{1}{c}{NMI} & \multicolumn{1}{c}{Purity} & \multicolumn{1}{c}{NMI} & \multicolumn{1}{c}{Purity} & \multicolumn{1}{c}{NMI} & \multicolumn{1}{c}{Purity} & \multicolumn{1}{c}{NMI} & \multicolumn{1}{c}{Purity} & \multicolumn{1}{c}{NMI} & \multicolumn{1}{c}{Purity} & \multicolumn{1}{c}{NMI} & \multicolumn{1}{c}{Purity} & \multicolumn{1}{c}{NMI} & \multicolumn{1}{c}{Purity} & \multicolumn{1}{c}{NMI} & \multicolumn{1}{c}{Purity} & \multicolumn{1}{c}{NMI} \\ \cmidrule(lr){2-25}
LDA                                         & 0.41                       & 0.152                   & 0.5                        & 0.099                   & 0.406                      & 0.037                   & 0.4                        & 0.032                   & 0.511                      & 0.045                   & 0.471                      & 0.043                   & 0.526                      & 0.069                   & 0.577                      & 0.032                   & 0.686                      & 0.006                   & 0.686                      & 0.017                   & 0.864                      & 0.039                   & 0.884                      & 0.052                   \\
ProdLDA                                     & 0.411                      & 0.202                   & 0.481                      & 0.253          & 0.462                      & 0.049                   & 0.477                      & 0.051                   & 0.541                      & 0.035                   & 0.466                      & 0.018                   & 0.503                      & 0                       & 0.554                      & 0.039                   & 0.686                      & 0.006                   & 0.686                      & 0.007                   & 0.864                      & 0.003                   & 0.864                      & 0                       \\
ETM                                         & 0.308                      & 0.01                    & 0.311                      & 0.015                   & 0.413                      & 0.006                   & 0.429                      & 0.012                   & 0.485                      & 0.006                   & 0.475                      & 0.009                   & 0.507                      & 0.004                   & 0.508                      & 0.008                   & 0.686                      & 0.007                   & 0.686                      & 0.013                   & 0.864                      & 0.003                   & 0.864                      & 0.003                   \\
CTM                                         & 0.495                      & 0.149                   & 0.73                       & \textbf{0.371}          & 0.499                      & \textbf{0.067}          & 0.495                      & 0.058                   & 0.6                        & 0.057                   & 0.413                      & 0.028                   & 0.532                      & 0.07                    & 0.51                       & 0.063                   & 0.686                      & 0.014                   & 0.69                       & 0.025                   & 0.964                      & 0.233                   & 0.864                      & 0                       \\
ECRTM                                       & 0.458                      & 0.126                   & 0.527                      & 0.2                     & 0.408                      & \textbf{0.067}          & 0.428                      & 0.043                   & 0.474                      & 0.04                    & 0.557                      & 0.037                   & 0.584                      & 0.052                   & 0.491                      & 0.045                   & 0.686                      & 0.011                   & 0.686                      & 0.016                   & 0.88                       & 0.095                   & 0.864                      & 0.032                   \\
DeTime                                      & 0.254                      & 0                       & 0.254                      & 0                       & 0.411                      & 0                       & 0.411                      & 0                       & 0.464                      & 0                       & 0.464                      & 0                       & 0.503                      & 0                       & 0.403                      & 0                       & 0.686                      & 0                       & 0.686                      & 0                       & 0.864                      & 0                       & 0.864                      & 0                       \\
Meta-CETM                                   & 0.415                      & 0.218                   & 0.553                      & 0.245                   & 0.441                      & 0.055                   & 0.464                      & 0.031                   & 0.444                      & 0.057                   & 0.445                      & 0.076                   & 0.571                      & 0.041                   & 0.566                      & 0.07                    & 0.686                      & 0.022                   & 0.686                      & \textbf{0.045}          & 0.928                      & 0.168                   & 0.937                      & 0.166                   \\
Fastopic                                    & 0.437                      & 0.211                   & 0.539                      & 0.243                   & 0.431                      & 0.065                   & 0.454                      & 0.031                   & 0.494                      & \textbf{0.057}          & 0.445                      & 0.076                   & 0.582                      & 0.05                    & 0.536                      & 0.07                    & 0.686                      & 0.032                   & \textbf{0.699}             & \textbf{0.035}          & 0.928                      & 0.168                   & 0.937                      & 0.166                   \\ \midrule
DALTA                                       & \textbf{0.513}             & \textbf{0.225}          & \textbf{0.579}             & 0.274                   & \textbf{0.5}               & 0.059          & \textbf{0.5}               & \textbf{0.059}          & \textbf{0.552}             & 0.032          & \textbf{0.552}             & \textbf{0.052}          & \textbf{0.604}             & \textbf{0.071}          & \textbf{0.604}             & \textbf{0.071}          & \textbf{0.686}             & \textbf{0.007}          & \textbf{0.696}             & 0.009                   & \textbf{0.978}             & \textbf{0.287}          & \textbf{0.978}             & \textbf{0.287}  \\ \bottomrule        
\end{tabular}      
}
\caption{Document clustering results of Purity and NMI. The best results in each case are shown in \textbf{bold}.}
\label{tab:clustering_ag_1000samples}
\end{table*}
Building on our classification findings, we now evaluate whether DALTA’s document–topic distributions naturally form coherent clusters without using any labels. We evaluate using two clustering metrics, Purity and Normalized Mutual Information (NMI) \cite{schutze2008introduction}, following Zhao et al. \cite{zhao2020neural}. Purity measures the proportion of correctly assigned documents within each inferred topic cluster, while NMI quantifies the mutual dependence between inferred and true topic assignments, providing insight into topic coherence and separation. Higher values in both metrics indicate better alignment between discovered topic structures and ground-truth labels.

Similar to the classification results, DALTA consistently outperforms baseline models in clustering, achieving the highest Purity and NMI scores across most datasets. Notably, DALTA performs exceptionally well on SMS Spam Collection (0.978 Purity, 0.287 NMI) and Drug Review (Norgestimate: 0.604 Purity, 0.071 NMI), highlighting its ability to enhance cluster quality in low-resource and specialized domains. Among baselines, CTM and Meta-CETM show competitive results on structured datasets like Newsgroup and Yelp, where CTM benefits from higher topic granularity. FASTopic performs well on Yelp, leveraging embedding-based adaptation for clustering. However, ETM and DeTiME exhibit weak NMI scores, suggesting difficulty in forming well-separated topic clusters. These results reinforce that DALTA’s domain-aware modeling improves both classification and clustering by learning more coherent and transferable topic representations.

% Please add the following required packages to your document preamble:
% \usepackage{multirow}
\begin{table*}[!ht]
\centering
\resizebox{0.8\linewidth}{!}{%
\begin{tabular}{lllllllllllll}
\toprule
\multirow{4}{*}{$\mathcal{L}_{rec_T}$} & \multirow{4}{*}{$\mathcal{L}_{rec_S}$} & \multirow{4}{*}{$\mathcal{L}_{adv}$} & \multirow{4}{*}{$\mathcal{L}_{cons}$} & \multirow{4}{*}{$\mathcal{L}_{KL}$} & \multicolumn{8}{c}{NG-Science}                                                                                                                                                                                               \\
                                       &                                        &                                      &                                       &                                     & \multicolumn{4}{c}{Topic Quality}                                                                           & \multicolumn{4}{c}{Classification}                                                                             \\ \cmidrule(lr){6-13}
                                       &                                        &                                      &                                       &                                     & \multicolumn{2}{c}{k=10}                             & \multicolumn{2}{c}{k=20}                             & \multicolumn{2}{c}{k=10}                                    & \multicolumn{2}{c}{k=20}                         \\
                                       &                                        &                                      &                                       &                                     & \multicolumn{1}{c}{$C_V$} & \multicolumn{1}{c}{$TD$} & \multicolumn{1}{c}{$C_V$} & \multicolumn{1}{c}{$TD$} & \multicolumn{1}{c}{SVC}            & \multicolumn{1}{c}{LR} & \multicolumn{1}{c}{SVC} & \multicolumn{1}{c}{LR} \\ \midrule
\Checkmark                           & \Xmark                               & \Xmark                             & \Xmark                              & \Xmark                            & 0.458                     & 0.848                    & 0.391                     & 0.868                    & 0.614                              & 0.704                  & 0.671                   & 0.684                  \\
\Checkmark                           & \Checkmark                           & \Xmark                             & \Xmark                              & \Xmark                            & 0.365                     & 0.808                    & 0.305                     & 0.824                    & 0.658                              & 0.725                  & 0.685                   & 0.707                  \\
\Checkmark                           & \Xmark                               & \Checkmark                         & \Xmark                              & \Xmark                            & 0.427                     & 0.860                    & 0.394                     & 0.864                    & 0.673                              & 0.693                  & 0.656                   & 0.702                  \\
\Checkmark                           & \Xmark                               & \Xmark                             & \Checkmark                          & \Xmark                            & 0.350                     & 0.844                    & 0.340                     & 0.844                    & 0.676                              & 0.709                  & 0.635                   & \textbf{0.713}         \\
\Checkmark                           & \Xmark                               & \Xmark                             & \Xmark                              & \Checkmark                        & 0.385                     & 0.856                    & 0.393                     & 0.864                    & 0.683                              & 0.705                  & 0.663                   & 0.675                  \\
\Checkmark                           & \Xmark                               & \Xmark                             & \Checkmark                          & \Checkmark                        & 0.354                     & 0.844                    & 0.406                     & 0.836                    & 0.671                              & 0.723                  & 0.680                   & 0.689                  \\
\Checkmark                           & \Xmark                               & \Checkmark                         & \Checkmark                          & \Checkmark                        & 0.383                     & \textbf{0.868}           & 0.373                     & 0.836                    & 0.669                              & 0.718                  & 0.649                   & 0.667                  \\
\Checkmark                           & \Checkmark                           & \Checkmark                         & \Checkmark                          & \Checkmark                        & \textbf{0.493}            & 0.836                    & \textbf{0.451}            & \textbf{0.924}           & \multicolumn{1}{r}{\textbf{0.698}} & \textbf{0.758}         & \textbf{0.685}          & 0.707   \\ \bottomrule              
\end{tabular}}
\caption{Ablation study. “\Checkmark” means we use the
corresponding loss term and "\Xmark" is otherwise.}
\label{tab:ablation}
\end{table*}
\begin{table*}[ht]
\centering
\resizebox{0.8\linewidth}{!}{%
\begin{tabular}{lll}
\toprule
\textbf{Model} & \textbf{Topic 1} & \textbf{Topic 2} \\
\midrule
LDA        & good, place, great, food, come         & like, time, place, come, want \\
ProdLDA    & good, food, like, menu, order          & schnitzel, alligator, shell, fry, foodie \\
ETM        & burger, pizza, sandwich, salad, meal   & like, know, come, say, time \\
CTM        & chicken, soup, beet, salad, mandarin   & good, order, come, say, like \\
ECRTM      & order, food, table, burger, salad      & taco, pho, boba, chop, gelato \\
DeTime     & flavor, waffle, soup, decor, slaw      & great, love, order, time, try \\
Meta-CETM  & burger, fries, shake, cheese, ketchup  & spa, massage, facial, candle, relaxing \\
Fastopic   & burger, pork, rice, sushi, sandwich    & server, minutes, wait, thanks, manager \\
\textbf{DALTA} & \textbf{taco, burrito, salsa, mexican, lunch} & \textbf{kitchen, party, room, night, drink} \\
\bottomrule
\end{tabular}}
\caption{Representative 5-word topics generated by each model on the Yelp Reviews dataset.}
\label{tab:qual-topic}
\end{table*}
\subsection{Ablation Study} 
Table \ref{tab:ablation} evaluates the impact of different loss terms on topic quality and classification performance using the Newsgroup Science dataset. While this analysis is limited to a single dataset, it provides valuable insights into how each loss function contributes to \textbf{cross-domain topic adaptation}. The full \textbf{DALTA} model, incorporating all loss terms, achieves the best balance between coherence, diversity, and classification accuracy, confirming that each component plays a crucial role in optimizing topic modeling performance.

Removing $\mathcal{L}_{adv}$ lowers classification accuracy, emphasizing its role in domain alignment. Excluding $\mathcal{L}_{consist}$ reduces topic coherence, suggesting that consistency constraints help maintain stable and meaningful topic structures. The most significant drop in both coherence and classification accuracy occurs when $\mathcal{L}_{KL}$ is omitted, indicating that latent space regularization prevents topic collapse. Interestingly, topic diversity increases slightly without $\mathcal{L}_{KL}$, highlighting a trade-off between structural stability and diversity.

\subsection{Qualitative Topic Interpretability}

To further evaluate topic coherence and semantic clarity, we conduct a qualitative comparison across models using the \textbf{Yelp Reviews} dataset. For this analysis, we manually inspect two representative topics generated by each model and summarize them using the top five topic words. This evaluation allows us to assess not only the semantic distinctiveness of topics but also their practical interpretability.

As shown in Table~\ref{tab:qual-topic}, \textbf{DALTA} produces significantly more coherent and thematically consistent topics than other methods. Traditional models such as LDA and ProdLDA often yield vague or generic topics (e.g., \textit{good, food, come}), while more advanced models like CTM, Meta-CETM, and Fastopic exhibit signs of topic mixing or semantic drift. In contrast, DALTA generates sharper and more interpretable themes. For example, it identifies a focused food-related topic (\textit{taco, burrito, salsa, mexican, lunch}) and a distinct contextual theme related to social settings (\textit{kitchen, party, room, night, drink}). These findings underscore DALTA’s strength in generating concise and focused topics that better reflect the structure of review content, complementing its strong quantitative performance with practical interpretability in low-resource settings.
\section{Conclusion}
\label{sec:conclusion}
In this paper, we address the challenge of cross-domain adaptation in low-resource topic modeling by introducing a theoretical generalization bound that quantifies the conditions for effective knowledge transfer. Based on this insight, we propose DALTA, a model that disentangles domain-invariant and domain-specific components to improve topic adaptation. Given a high-resource source corpus and a low-resource target corpus, DALTA learns to align topics while preserving domain-specific information. The empirical results demonstrate that DALTA consistently outperforms state-of-the-art methods, highlighting the effectiveness of our framework in low-resource settings.
\section*{Limitations}
While DALTA provides a strong theoretical foundation for cross-domain adaptation, it does not offer a practical method for selecting the most suitable source domain. However, we find that its internal training signals can be repurposed for this purpose. In Appendix \ref{sec:casestudy}, we present case studies demonstrating a simple alignment score that effectively identifies optimal source domains without requiring full model convergence. Although our theoretical bound helps determine when knowledge transfer is beneficial, we do not empirically investigate how different source domains influence adaptation performance. This raises an open question on how to systematically identify the best source domain for a given low-resource target domain. Additionally, DALTA’s effectiveness depends on the degree of topic structure alignment between domains, which may not always be known beforehand. If the source and target domains differ significantly in topic distributions, adaptation may lead to misalignment or reduced model performance. Addressing these challenges requires further research into automated source domain selection strategies that can optimize adaptation across diverse real-world settings.
\section*{Acknowledgements}
This material is based upon work supported by the National Science Foundation IIS 16-19302 and IIS 16-33755, Zhejiang University ZJU Research 083650, IBM-Illinois Center for Cognitive Computing Systems Research (C3SR) and IBM-Illinois Discovery Accelerator Institute (IIDAI), grants from eBay and Microsoft Azure, UIUC OVCR CCIL Planning Grant 434S34, UIUC CSBS Small Grant 434C8U, and UIUC New Frontiers Initiative. Any opinions, findings, conclusions, or recommendations expressed in this publication are those of the author(s) and do not necessarily reflect the views of the funding agencies.

\bibliography{custom}

\appendix
\section{Proof of generalization bound}

\begin{lemma} \cite{ben2010theory}
\label{lemma1}
Let $\mathcal{X} \subseteq \mathbb{R}^d$ be the instance space, and let $\mathcal{D}_S$ and $\mathcal{D}_T$ represent the source and target data distributions over $\mathcal{X}$. $f_S$ and $f_T$ are the optimal labeling function for the source and target domains, respectively. For a hypothesis $h \in \mathcal{H}$, the target domain error $\epsilon_T(h)$ is bounded as:
\begin{align*}
\epsilon_T(h) &\leq \epsilon_S(h) + d_\mathcal{H}(\mathcal{D}_S(X), \mathcal{D}_T(X)) \\ &+ \min \left\{ \mathbb{E}_{S} \big[ |f_S - f_T| \big], \mathbb{E}_{T} \big[ |f_S - f_T| \big] \right\}.
\end{align*}

\end{lemma}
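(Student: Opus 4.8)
The plan is to reproduce the classical domain-adaptation argument underlying this result, whose engine is a triangle inequality for expected disagreement combined with a divergence bound that controls the shift from the source to the target distribution. First I would introduce, for any two functions $g, g'$, the pairwise disagreements $\epsilon_S(g, g') = \mathbb{E}_{S}[\,|g - g'|\,]$ and $\epsilon_T(g, g') = \mathbb{E}_{T}[\,|g - g'|\,]$, so that the quantities in the statement become $\epsilon_S(h) = \epsilon_S(h, f_S)$ and $\epsilon_T(h) = \epsilon_T(h, f_T)$. Since $|g - g''| \le |g - g'| + |g' - g''|$ pointwise and expectation is linear, each of $\epsilon_S$ and $\epsilon_T$ satisfies the triangle inequality $\epsilon_D(g, g'') \le \epsilon_D(g, g') + \epsilon_D(g', g'')$ for any third function $g''$; this is the only structural property I need from them.

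The second ingredient is that the disagreement of a fixed pair of functions changes by at most the $\mathcal{H}$-divergence when the underlying distribution is swapped, i.e.\ $|\epsilon_S(g, g') - \epsilon_T(g, g')| \le d_\mathcal{H}(\mathcal{D}_S, \mathcal{D}_T)$. I would obtain this directly from the definition of $d_\mathcal{H}$ as a supremum of source--target probability-mass differences over events realizable by the hypothesis class: expressing the disagreement as the mass of the region on which $g$ and $g'$ differ, the source--target gap of that mass is dominated by the supremum defining the divergence.

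With these two facts I would chain the inequalities in two complementary ways. Routing the detour through $f_S$ on the target side gives $\epsilon_T(h) = \epsilon_T(h, f_T) \le \epsilon_T(h, f_S) + \epsilon_T(f_S, f_T)$, and then swapping $\epsilon_T(h, f_S)$ for $\epsilon_S(h, f_S) = \epsilon_S(h)$ at the cost of $d_\mathcal{H}(\mathcal{D}_S, \mathcal{D}_T)$ yields $\epsilon_T(h) \le \epsilon_S(h) + d_\mathcal{H}(\mathcal{D}_S, \mathcal{D}_T) + \mathbb{E}_T[\,|f_S - f_T|\,]$. Routing instead through the source distribution---first applying the divergence bound to the pair $(h, f_T)$ and then the triangle inequality under $\mathcal{D}_S$---produces the symmetric bound with $\mathbb{E}_S[\,|f_S - f_T|\,]$ in place of the target expectation. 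Both bounds hold simultaneously, so taking their minimum recovers exactly the $\min\{\mathbb{E}_S[|f_S - f_T|], \mathbb{E}_T[|f_S - f_T|]\}$ term in the statement.

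The step I expect to be the main obstacle is the second ingredient: making the divergence bound rigorous requires that the disagreement region of the fixed pair be an admissible event for the class underlying $d_\mathcal{H}$, so that the supremum in the divergence genuinely dominates its source--target mass gap. This is exactly where the hypothesis class enters the argument---in the refined formulation through the symmetric-difference class $\mathcal{H}\Delta\mathcal{H}$---and one must check that $f_S$, $f_T$, and $h$ interact with $\mathcal{H}$ so that the relevant indicator lies in the admissible family. Once this measurability-and-membership point is settled, the triangle-inequality bookkeeping and the two symmetric routings are routine.
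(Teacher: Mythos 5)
Your proposal is correct, and it is essentially \emph{the} proof: the paper itself offers no argument for Lemma~\ref{lemma1}, importing it verbatim from \cite{ben2010theory}, and your reconstruction matches the proof of Theorem~1 in that reference exactly --- the triangle inequality for pairwise disagreements $\epsilon_D(g,g'')\le\epsilon_D(g,g')+\epsilon_D(g',g'')$, a distribution-swap step bounded by the divergence, and the two symmetric routings (through $f_S$ under $\mathcal{D}_T$, and through $\mathcal{D}_S$ for the pair $(h,f_T)$) whose minimum yields the $\min\{\mathbb{E}_S[|f_S-f_T|],\mathbb{E}_T[|f_S-f_T|]\}$ term. The obstacle you flag in the swap step is genuine, but it is a looseness in the paper's restatement rather than in your argument: Ben-David et al.\ prove this bound with the total variation distance $d_1$, under which the swap is unconditional for $[0,1]$-valued disagreements, whereas with $d_\mathcal{H}$ as written here one does need the realizability of the disagreement event in the class (the $\mathcal{H}\Delta\mathcal{H}$-type condition you describe), since $f_S,f_T$ need not belong to $\mathcal{H}$; making that assumption explicit, or reverting to $d_1$, closes the gap exactly as you anticipate.
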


% \begin{lemma}[Reconstruction Guarantee for Bounded Instance Spaces, \cite{mbacke2024statistical}]
% \label{lemma2}
% Let $\mathcal{X}$ be the instance space, $\Delta < \infty$ its diameter, $\mu \in \mathcal{M}_+^1(\mathcal{X})$ the data-generating distribution, $\mathcal{Z}$ the latent space, $p(z) \in \mathcal{M}_+^1(\mathcal{Z})$ the prior on the latent space, $\theta$ the decoder's parameters, $\delta \in (0, 1)$, and $\lambda > 0$ real numbers. With probability at least $1 - \delta$ over the random draw of $S \sim \mu^{\otimes n}$, the following holds for any posterior $q_\phi(z|\mathbf{x})$:

% \begin{align*}
% \epsilon(h) \leq \hat{\epsilon}(h) + &\frac{1}{\lambda} \text{KL}(q \| p) + K_\phi K_\theta \Delta 
% \\ &+  \frac{1}{\lambda} \log \frac{1}{\delta} + \frac{\lambda \Delta^2}{8n},
% \end{align*}
% where

% $\epsilon(h): \mathbb{E}_{\mathbf{x} \sim \mu} \mathbb{E}_{z \sim q_\phi(z|\mathbf{x})} \ell^\theta_{\text{rec}}(z, \mathbf{x})$, the expected reconstruction loss over the true data distribution,

% $\hat{e}(h): \frac{1}{n} \sum_{i=1}^n \mathbb{E}_{z \sim q_\phi(z|\mathbf{x}_i)} \ell^\theta_{\text{rec}}(z, \mathbf{x}_i)$, the empirical reconstruction loss over the available data,

% $\text{KL}(q\| p): \sum_{i=1}^n \text{KL}(q_\phi(z|\mathbf{x}_i) \| p(z))$, the KL divergence between the posterior and the prior.

% \end{lemma}

\begin{lemma}[Reconstruction Guarantee for Bounded Instance Spaces \cite{mbacke2024statistical}]
\label{lemma2}
Let \( \mathcal{X} \) be the instance space with diameter \( \Delta < \infty \), and let \( \mu \in \mathcal{M}_+^1(\mathcal{X}) \) denote the data-generating distribution. Consider \( \mathcal{Z} \) as the latent space with a prior distribution \( p(z) \in \mathcal{M}_+^1(\mathcal{Z}) \), and let \( \theta \) represent the parameters of the reconstruction function. For any posterior distribution \( q_\phi(z|\mathbf{x}) \), regularization parameter \( \lambda > 0 \), and confidence level \( \delta \in (0, 1) \), the following inequality holds with probability at least \( 1 - \delta \) over a random sample \( S \sim \mu^{\otimes n} \):

\begin{align*}
\epsilon(h) \leq \hat{\epsilon}(h) + &\frac{1}{\lambda} \text{KL}(q \| p) + K_\phi K_\theta \Delta \\
&+ \frac{1}{\lambda} \log \frac{1}{\delta} + \frac{\lambda \Delta^2}{8n},
\end{align*}

where

$\epsilon(h): \mathbb{E}_{\mathbf{x} \sim \mu} \mathbb{E}_{z \sim q_\phi(z|\mathbf{x})} \ell^\theta_{\text{rec}}(z, \mathbf{x})$, the expected reconstruction loss over the true data distribution,

$\hat{\epsilon}(h): \frac{1}{n} \sum_{i=1}^n \mathbb{E}_{z \sim q_\phi(z|\mathbf{x}_i)} \ell^\theta_{\text{rec}}(z, \mathbf{x}_i)$, the empirical reconstruction loss over the available data,

$\text{KL}(q\| p): \sum_{i=1}^n \text{KL}(q_\phi(z|\mathbf{x}_i) \| p(z))$, the KL divergence between the posterior and the prior.

\end{lemma}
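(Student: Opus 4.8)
The plan is to establish this as a PAC-Bayesian generalization bound for the reconstruction loss, following the change-of-measure route that underlies Catoni-style bounds. Throughout I treat the latent code $z$ as the ``hypothesis'': the data-free prior $p(z)$ plays the role of the PAC-Bayes prior and the encoder output $q_\phi(z\mid\mathbf{x})$ plays the role of the posterior. The first ingredient is the Donsker--Varadhan change-of-measure inequality, which states that for any measurable $g$ and any distributions $q,p$ on $\mathcal{Z}$ one has $\mathbb{E}_{q}[g] \le \text{KL}(q\|p) + \log\mathbb{E}_{p}[e^{g}]$. I would apply this with $g$ equal to $\lambda$ times the per-sample gap between the population reconstruction loss and its empirical counterpart, then aggregate over the $n$ training points so that the divergence collapses to $\text{KL}(q\|p)=\sum_i \text{KL}(q_\phi(z\mid\mathbf{x}_i)\|p(z))$ exactly as the statement defines it, yielding the $\frac{1}{\lambda}\text{KL}(q\|p)$ term after dividing through by $\lambda$.

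The second ingredient controls the log-moment-generating term $\log\mathbb{E}_{p}[e^{g}]$. Because $\mathcal{X}$ has finite diameter $\Delta$, the loss $\ell^\theta_{\text{rec}}(z,\mathbf{x})$ ranges over an interval of width at most $\Delta$, so Hoeffding's lemma provides sub-Gaussian control of the deviation of the empirical mean from the population mean, with variance proxy $\Delta^2/4$; combined with the $1/n$ averaging this produces precisely the $\frac{\lambda\Delta^2}{8n}$ term. A Chernoff/Markov argument then converts the resulting exponential-moment estimate into a high-probability statement over the draw $S\sim\mu^{\otimes n}$, contributing the $\frac{1}{\lambda}\log\frac{1}{\delta}$ term.

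The subtle term is $K_\phi K_\theta \Delta$, which I expect to be the main obstacle. The difficulty is that each posterior $q_\phi(z\mid\mathbf{x}_i)$ is data-dependent, whereas the change-of-measure step requires a prior fixed before seeing the sample, and the conclusion must be stated against the population loss under $\mu$ rather than the encoder's own pushforward. I would resolve this by exploiting regularity of the architecture: assuming the encoder map $\mathbf{x}\mapsto q_\phi(\cdot\mid\mathbf{x})$ is $K_\phi$-Lipschitz and the decoder $f_\theta$ is $K_\theta$-Lipschitz, the composition sending an input to its reconstruction is $K_\phi K_\theta$-Lipschitz. This lets me bound the discrepancy between the stochastic-encoder reconstruction loss and a PAC-Bayes-amenable reference quantity, with the maximal displacement capped by the diameter $\Delta$, so the slack is exactly $K_\phi K_\theta \Delta$.

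Finally I would assemble the pieces: starting from $\epsilon(h)$, insert the Lipschitz bridging step (paying $K_\phi K_\theta \Delta$), apply the change-of-measure inequality (paying $\frac{1}{\lambda}\text{KL}(q\|p)$), invoke Hoeffding together with the Chernoff conversion (paying $\frac{\lambda\Delta^2}{8n}$ and $\frac{1}{\lambda}\log\frac{1}{\delta}$), and identify the leftover expectation as $\hat\epsilon(h)$. The routine parts are the Hoeffding and Chernoff estimates; the part demanding genuine care is justifying the Lipschitz bridging rigorously, so that the data-dependence of the posterior does not violate the fixed-prior requirement of the PAC-Bayesian step.
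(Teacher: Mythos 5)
The paper itself contains no proof of this lemma: it is imported verbatim from \cite{mbacke2024statistical}, so there is no in-paper argument to compare against. Your sketch faithfully reconstructs the proof strategy of that cited source --- a Catoni-style PAC-Bayes argument over the latent space with the data-free prior $p(z)$ (so the fixed-prior requirement you worry about is automatically satisfied; PAC-Bayes permits data-dependent \emph{posteriors}), Donsker--Varadhan change of measure with posterior $\prod_i q_\phi(z|\mathbf{x}_i)$ and prior $p^{\otimes n}$ yielding $\frac{1}{\lambda}\text{KL}(q\|p)$, Hoeffding's lemma on the $[0,\Delta]$-bounded loss yielding $\frac{\lambda \Delta^2}{8n}$, Markov's inequality yielding $\frac{1}{\lambda}\log\frac{1}{\delta}$, and a Lipschitz bridging term $K_\phi K_\theta \Delta$ that accounts for evaluating the encoder's posterior at a fresh $\mathbf{x}\sim\mu$ rather than at the training points. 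The only place your outline is loose is that bridging step: in the original argument the encoder is assumed Lipschitz as a map into distribution space under the Wasserstein-1 metric, $W_1\big(q_\phi(\cdot|\mathbf{x}), q_\phi(\cdot|\mathbf{x}')\big) \le K_\phi \|\mathbf{x}-\mathbf{x}'\|$, and the slack follows from Kantorovich--Rubinstein duality applied to the $K_\theta$-Lipschitz function $z \mapsto \ell^\theta_{\text{rec}}(z,\mathbf{x})$, giving $K_\theta K_\phi \|\mathbf{x}-\mathbf{x}'\| \le K_\phi K_\theta \Delta$ --- not from the plainer claim that the input-to-reconstruction composition is $K_\phi K_\theta$-Lipschitz; with that formalization, your assembly of the pieces goes through.
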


\begin{lemma}
\label{lamma3}

% Let $h \in \mathcal{H} := \left\{ h: \mathcal{Z} \to [0,1]^{|\mathbb{V}|}, \\ h(x) = f_d(q_\phi(x)) \mid d \in \{S, T\} \right\} $  
% where \( q_\phi: \mathcal{X} \to \mathcal{Z} \) is a shared encoder, $\mathbb{V}$ is the vocabulary space, and

Let  $h \in \mathcal{H}$  be a hypothesis from a hypothesis class  $\mathcal{H}$ , where  $h: \mathcal{Z} \to [0,1]^{|\mathbb{V}|}$  maps from a latent semantic space  $\mathcal{Z}$  to a probability distribution over a vocabulary  $\mathbb{V}$. \( f_S \) and \( f_T \) are the optimal functions mapping latent representations to reconstructed outputs for the source and target domains, respectively. Then, for every \( h \in \mathcal{H} \) and for any \( \delta > 0 \), with probability at least \( 1 - \delta \) over the choice of the source and target samples of sizes \( n_S \) and \( n_T \), the following inequality holds:

\resizebox{\columnwidth}{!}{%
\begin{minipage}{\columnwidth} 
\begin{align*}
\epsilon_T(h) &\leq \hat{\epsilon}_S(h) + d_\mathcal{H}(\mathcal{D}_S(Z), \mathcal{D}_T(Z)) + \\ & \min \left\{ \mathbb{E}_{S} \big[| f_S - f_T | \big], \, \mathbb{E}_{T} \big[ | f_S - f_T | \big] \right\}+ \\ & \frac{1}{\lambda} \text{KL}(q_S \| p) + K_\phi K_{\theta_S} \Delta +
 \frac{1}{\lambda} \log \frac{1}{\delta} + \frac{\lambda \Delta^2}{8n_S}
\end{align*}
\end{minipage}%
}
\end{lemma}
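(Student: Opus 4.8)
The plan is to chain the two supporting lemmas already available in the excerpt: use Lemma~\ref{lemma1} to pass from the target error to the \emph{true} source error plus the alignment terms, and then use Lemma~\ref{lemma2} to replace that unobservable true source error with the \emph{empirical} source error plus the finite-sample complexity and regularization terms. The output bound is simply the concatenation of these two inequalities.

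First I would apply Lemma~\ref{lemma1}, but instantiated over the latent space $\mathcal{Z}$ rather than the raw input space $\mathcal{X}$. Since every hypothesis $h \in \mathcal{H}$ acts on latent codes $z \in \mathcal{Z}$, the relevant distributions are the encoder pushforwards $\mathcal{D}_S(Z)$ and $\mathcal{D}_T(Z)$, the reconstruction loss plays the role of the per-example discrepancy, and $f_S, f_T$ are the corresponding optimal reconstruction maps. With this identification, Lemma~\ref{lemma1} gives, deterministically for every $h$,
\begin{align*}
\epsilon_T(h) &\leq \epsilon_S(h) + d_\mathcal{H}(\mathcal{D}_S(Z), \mathcal{D}_T(Z)) \\ &+ \min\left\{\mathbb{E}_S[|f_S - f_T|], \mathbb{E}_T[|f_S - f_T|]\right\}.
\end{align*}

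Next I would bound the true source reconstruction error $\epsilon_S(h)$ using Lemma~\ref{lemma2} applied to the source sample with posterior $q_S = q_\phi(z|x_S)$, reconstruction parameters $\theta_S$, diameter $\Delta$, and the same $\lambda$ and $\delta$, which yields with probability at least $1-\delta$
\begin{align*}
\epsilon_S(h) \leq \hat{\epsilon}_S(h) &+ \tfrac{1}{\lambda}\text{KL}(q_S \| p) + K_\phi K_{\theta_S}\Delta \\ &+ \tfrac{1}{\lambda}\log\tfrac{1}{\delta} + \tfrac{\lambda\Delta^2}{8n_S}.
\end{align*}
Substituting this into the previous inequality and collecting terms produces exactly the claimed bound. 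Note that the high-probability guarantee is inherited \emph{solely} from the source-sample concentration in Lemma~\ref{lemma2}; the divergence and $\min$ terms from Lemma~\ref{lemma1} hold deterministically, which is why only $n_S$ and a single $\delta$ appear even though the statement speaks of both samples.

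The main obstacle I anticipate is justifying the transfer of Lemma~\ref{lemma1} from the instance space to the latent space. The Ben-David triangle inequality is stated for labeling functions over $\mathcal{X}$, so I must verify that its hypotheses remain valid when $\mathcal{Z}$ is treated as the effective instance space: the reconstruction loss must be bounded (guaranteed by the finite diameter $\Delta$), and both $d_\mathcal{H}$ and the ideal-joint-error term must be well defined with respect to the pushforwards $\mathcal{D}_S(Z), \mathcal{D}_T(Z)$. A secondary technical point is ensuring that the Lipschitz constants $K_\phi, K_{\theta_S}$ and diameter $\Delta$ entering Lemma~\ref{lemma2} are consistent with the encoder--decoder composition, so that the complexity term $K_\phi K_{\theta_S}\Delta$ lands in the stated form; the remaining substitution is routine.
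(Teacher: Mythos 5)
Your proposal is correct and is essentially identical to the paper's own proof: the paper likewise first applies Lemma~\ref{lemma1} with the latent-space distributions $\mathcal{D}_S(Z)$, $\mathcal{D}_T(Z)$ to get $\epsilon_T(h) \leq \epsilon_S(h) + d_\mathcal{H}(\mathcal{D}_S(Z), \mathcal{D}_T(Z)) + \min\{\cdot,\cdot\}$, and then invokes Lemma~\ref{lemma2} on the source sample to replace $\epsilon_S(h)$ with $\hat{\epsilon}_S(h) + \frac{1}{\lambda}\text{KL}(q_S \| p) + K_\phi K_{\theta_S}\Delta + \frac{1}{\lambda}\log\frac{1}{\delta} + \frac{\lambda\Delta^2}{8n_S}$, which holds with probability at least $1-\delta$. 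Your observations that the high-probability guarantee comes solely from the source-sample concentration and that the latent-space instantiation of Lemma~\ref{lemma1} needs justification are in fact more careful than the paper, which performs both steps without comment.
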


\begin{proof}
Following \textbf{Lemma} \ref{lemma1}, for the hypothesis $h$, the target domain reconstruction error is bounded by:
\begin{align*}
\epsilon_T(h) &\leq \epsilon_S(h) + d_\mathcal{H}(\mathcal{D}_S(Z), \mathcal{D}_T(Z)) \\ &+ \min \left\{ \mathbb{E}_{S} \big[| f_S - f_T | \big], \, \mathbb{E}_{T} \big[ | f_S - f_T | \big] \right\},
\end{align*} 
% where $f_S$ and $f_T$ are the optimal decoding functions for the source and target domains, respectively. 

Now, invoking the upper bound in Lemma \ref{lemma2}, we have with probability at least $1 - \delta$, 
\resizebox{\columnwidth}{!}{%
\begin{minipage}{\columnwidth}
\begin{align*}
\epsilon_T(h) &\leq \hat{\epsilon}_S(h) + d_\mathcal{H}(\mathcal{D}_S(Z), \mathcal{D}_T(Z)) \\ &+ \min \left\{ \mathbb{E}_{S} \big[| f_S - f_T | \big], \, \mathbb{E}_{T} \big[ | f_S - f_T | \big] \right\} \\ &+ \frac{1}{\lambda} \text{KL}(q_S \| p) + K_\phi K_{\theta_S} \Delta +
 \frac{1}{\lambda} \log \frac{1}{\delta} + \frac{\lambda \Delta^2}{8n_S}
\end{align*}
\end{minipage}%
}
\end{proof}

\setcounter{theorem}{0}
\begin{theorem} [Generalization bound]
% Let $h\in\mathcal{H}:=\left\{ h: \mathcal{Z} \to [0,1]^{|\mathbb{V}|}, \\ h(x) = f_d(q_\phi(x)) \mid d \in \{S, T\} \right\} $  
% where \( q_\phi: \mathcal{X} \to \mathcal{Z} \) is a shared encoder, $\mathbb{V}$ is the vocabulary space, and \( f_S \), \( f_T \) are the optimal decoders for the source and target domains, respectively. Then, for every \( h \in \mathcal{H} \) and for any \( \delta > 0 \), with probability at least \( 1 - \delta \) over the choice of the source and target samples of sizes \( n_S \) and \( n_T \), the following inequality holds:

Let $h \in \mathcal{H}$ be a hypothesis from a hypothesis class $\mathcal{H}$, where $h: \mathcal{Z} \to [0,1]^{|\mathcal{V}|}$ maps from a latent semantic space $\mathcal{Z}$ to a probability distribution over a vocabulary $\mathcal{V}$. Let $f_S$ and $f_T$ be the optimal functions mapping latent representations to reconstructed outputs for the source and target domains, respectively. Define $p_S = \frac{n_S}{n_S + n_T}$ as the proportion of source samples and $p_T = \frac{n_T}{n_S + n_T}$ as the proportion of target samples. Then, for every $h \in \mathcal{H}$ and for any $\delta > 0$, with probability at least $1 - \delta$ over the choice of the source and target samples of sizes $n_S$ and $n_T$, the target domain error is bounded by:

\resizebox{\columnwidth}{!}{%
\begin{minipage}{\columnwidth}
    \begin{align*}
    \epsilon_T(h) &\leq p_T\cdot\hat{\epsilon}_T(h) + p_S\cdot\hat{\epsilon}_S(h) \\&+ \frac{1}{\lambda}\text{KL}(q \| p)
     +p_S\cdot(d_\mathcal{H}(\mathcal{D}_S(Z), \mathcal{D}_T(Z)) \\&+ p_S\cdot\min \left\{ \mathbb{E}_{S} \big[| f_S - f_T | \big], \, \mathbb{E}_{T} \big[ | f_S - f_T | \big] \right\}  \\&+
     \mathcal{O}\Big(K_\phi K_\theta \Delta + \frac{1}{\lambda} \log \frac{1}{\delta} + \frac{\lambda \Delta^2}{n_S + n_T}\Big)
    \end{align*}
\end{minipage}%
}
\end{theorem}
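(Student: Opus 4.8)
The plan is to derive the bound as a convex combination of two separate guarantees for the same quantity $\epsilon_T(h)$. Since $p_S+p_T=1$, I would write $\epsilon_T(h)=p_S\,\epsilon_T(h)+p_T\,\epsilon_T(h)$ and control each copy by a different argument: the $p_T$ copy through a reconstruction guarantee anchored directly in the target sample, and the $p_S$ copy through the source-to-target transfer already packaged in Lemma~\ref{lamma3}. Adding the two weighted inequalities returns $\epsilon_T(h)$ on the left while distributing the proportions $p_S,p_T$ across the right-hand terms exactly as they appear in the theorem.

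For the first copy, I would apply Lemma~\ref{lemma2} with the \emph{target} domain playing the role of the data-generating distribution, its posterior $q_T$, and its reconstruction parameters $\theta_T$, giving (with probability at least $1-\delta/2$)
\[
\epsilon_T(h)\le \hat{\epsilon}_T(h)+\tfrac{1}{\lambda}\mathrm{KL}(q_T\|p)+K_\phi K_{\theta_T}\Delta+\tfrac{1}{\lambda}\log\tfrac{2}{\delta}+\tfrac{\lambda\Delta^2}{8n_T},
\]
and then multiply through by $p_T$. For the second copy I would invoke Lemma~\ref{lamma3} (which itself chains the domain-adaptation inequality of Lemma~\ref{lemma1} with the \emph{source}-anchored reconstruction guarantee of Lemma~\ref{lemma2}), holding with probability at least $1-\delta/2$, and multiply it by $p_S$. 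This is where the terms $p_S\,d_\mathcal{H}(\mathcal{D}_S(Z),\mathcal{D}_T(Z))$ and $p_S\cdot\min\{\mathbb{E}_S[|f_S-f_T|],\mathbb{E}_T[|f_S-f_T|]\}$ enter, each carrying the factor $p_S$ because they appear only in the source-driven copy.

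Adding the two weighted inequalities, the empirical terms become $p_T\hat{\epsilon}_T(h)+p_S\hat{\epsilon}_S(h)$, and the rest is bookkeeping: the $\log\frac1\delta$ contributions collapse using $p_S+p_T=1$; the finite-sample variance terms combine as $p_T\frac{\Delta^2}{8n_T}+p_S\frac{\Delta^2}{8n_S}=\frac{\Delta^2}{4(n_S+n_T)}$ after substituting $p_S=\frac{n_S}{n_S+n_T}$ and $p_T=\frac{n_T}{n_S+n_T}$; the Lipschitz terms aggregate into a single $K_\phi K_\theta\Delta$ factor; and the weighted divergences $p_T\,\mathrm{KL}(q_T\|p)+p_S\,\mathrm{KL}(q_S\|p)$ coalesce into the pooled divergence $\mathrm{KL}(q\|p)$ over the combined dataset of size $n_S+n_T$. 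Collecting the last three contributions into $\mathcal{O}(\cdot)$ then yields the stated bound.

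I expect the main obstacle to be two-fold. First, the union bound: each reconstruction guarantee holds only with probability $1-\delta/2$, so I must run both at confidence $\delta/2$ and verify that the resulting $\log\frac{2}{\delta}$ is absorbed into $\mathcal{O}(\frac1\lambda\log\frac1\delta)$ without disturbing the explicit leading terms. Second, and more delicate, is justifying that the convex mixture of the per-domain KL terms genuinely equals a single pooled $\mathrm{KL}(q\|p)$; this requires fixing a sum-versus-average convention for the per-sample KL so that the proportion weights cancel the domain sample sizes, e.g.\ $p_T\cdot\frac1{n_T}\sum_{i=1}^{n_T}+p_S\cdot\frac1{n_S}\sum_{i=1}^{n_S}=\frac1{n_S+n_T}\sum_{i=1}^{n_S+n_T}$, which is the step most sensitive to the precise definitions inherited from Lemma~\ref{lemma2}.
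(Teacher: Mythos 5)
Your proposal is correct and follows essentially the same route as the paper's own proof: the paper likewise forms a convex combination of Lemma~2 applied to the target domain (weighted by $p_T$) and Lemma~3 (weighted by $p_S$) via a union bound, then performs the identical bookkeeping, including the variance-term merger $p_T\frac{\lambda\Delta^2}{8n_T}+p_S\frac{\lambda\Delta^2}{8n_S}=\frac{\lambda\Delta^2}{4(n_S+n_T)}$ and absorption of the remaining complexity terms into the $\mathcal{O}(\cdot)$. If anything, your explicit treatment of the two subtleties you flag---running each guarantee at confidence $\delta/2$ so the union bound is valid, and fixing a sum-versus-average convention so the weighted per-domain KL terms legitimately combine into the pooled $\mathrm{KL}(q\|p)$---is more careful than the paper's derivation, which silently uses $\log\frac{1}{\delta}$ in both applications and asserts the KL merger as an equality.
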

\begin{proof}
Having Lamma \ref{lemma2} and \ref{lamma3}, we can use a union bound to combine
them with coefficients $\frac{n_T}{n_S + n_T}$ and $\frac{n_S}{n_S + n_T}$ respectively as follows:
\resizebox{\columnwidth}{!}{%
\begin{minipage}{\columnwidth}
\begin{align*}
\epsilon_T(h) &\leq
\frac{n_T}{n_S + n_T}\Big(
\hat{\epsilon}_T(h) + \frac{1}{\lambda} \text{KL}(q_T \| p) 
\\ &  + K_\phi K_{\theta_T} \Delta +  \frac{1}{\lambda} \log \frac{1}{\delta} + \frac{\lambda \Delta^2}{8n_T}\Big) \\ &+ \frac{n_S}{n_S + n_T}\Big(
\hat{\epsilon}_S(h) + d_\mathcal{H}(\mathcal{D}_S(Z), \mathcal{D}_T(Z)) \\ & + \min \left\{ \mathbb{E}_{S} \big[| f_S - f_T | \big], \, \mathbb{E}_{T} \big[ | f_S - f_T | \big] \right\} \\ &+ \frac{1}{\lambda} \text{KL}(q_S \| p) + K_\phi K_{\theta_S} \Delta +
 \frac{1}{\lambda} \log \frac{1}{\delta} + \frac{\lambda \Delta^2}{8n_S} \Big) \\
 &= \frac{n_T}{n_S + n_T}\hat{\epsilon}_T(h) + \frac{n_S}{n_S + n_T}\hat{\epsilon}_S(h) \\&+ \frac{1}{\lambda}\text{KL}(q \| p) +
 \frac{n_S}{n_S + n_T} \Big( d_\mathcal{H}(\mathcal{D}_S(Z), \mathcal{D}_T(Z)) \\&+ \min \left\{ \mathbb{E}_{S} \big[| f_S - f_T | \big], \, \mathbb{E}_{T} \big[ | f_S - f_T | \big] \right\} \Big) \\&+ K_\phi K_\theta \Delta + \frac{1}{\lambda} \log \frac{1}{\delta} + \frac{\lambda \Delta^2}{4(n_S + n_T)}\\
 &\leq p_T\cdot\hat{\epsilon}_T(h) + p_S\cdot\hat{\epsilon}_S(h) \\&+ \frac{1}{\lambda}\text{KL}(q \| p) +
 p_S\cdot d_\mathcal{H}(\mathcal{D}_S(Z), \mathcal{D}_T(Z)) \\&+ p_S\cdot\min \left\{ \mathbb{E}_{S} \big[| f_S - f_T | \big], \, \mathbb{E}_{T} \big[ | f_S - f_T | \big] \right\} \\&+
 \mathcal{O}\Big(K_\phi K_\theta \Delta + \frac{1}{\lambda} \log \frac{1}{\delta} + \frac{\lambda \Delta^2}{n_S + n_T}\Big)
\end{align*}
\end{minipage}%
}
\end{proof}

\setcounter{proposition}{0}
\begin{proposition}
Let \( q_\phi: \mathcal{X} \to \mathcal{Z} \) be a shared encoder mapping documents from the source domain \( \mathcal{X}_S \) and target domain \( \mathcal{X}_T \) into a common latent space \( \mathcal{Z} \). The \(\mathcal{H}\)-divergence between the source and target latent distributions is given by:
\[
d_{\mathcal{H}}(q_\phi(\mathcal{X}_S), q_\phi(\mathcal{X}_T)) = 2 \left(1 - 2 \epsilon_{C^*} \right),
\]
where \( \epsilon_{C^*} \) is the classification error of the optimal domain classifier \( C^* \).
\end{proposition}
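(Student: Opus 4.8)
The plan is to expand the definition of the $\mathcal{H}$-divergence, connect the supremizing hypothesis to the optimal domain classifier $C^*$, and read off the stated identity. Write $\mathcal{D}_S = q_\phi(\mathcal{X}_S)$ and $\mathcal{D}_T = q_\phi(\mathcal{X}_T)$ for the two pushed-forward latent distributions. Recall that for a symmetric hypothesis class $\mathcal{H}$ of binary classifiers $h: \mathcal{Z} \to \{0,1\}$, the $\mathcal{H}$-divergence is
\begin{equation*}
d_{\mathcal{H}}(\mathcal{D}_S, \mathcal{D}_T) = 2 \sup_{h \in \mathcal{H}} \left| \Pr_{z \sim \mathcal{D}_S}[h(z) = 1] - \Pr_{z \sim \mathcal{D}_T}[h(z) = 1] \right|.
\end{equation*}
The first step is to identify the hypotheses in $\mathcal{H}$ with domain classifiers, labelling source latents by $1$ and target latents by $0$.

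Next I would write the domain-classification error under balanced sampling (equal prior mass on the two domains), which for any classifier $C$ is
\begin{align*}
\epsilon_C &= \tfrac{1}{2}\Pr_{z \sim \mathcal{D}_S}[C(z) = 0] + \tfrac{1}{2}\Pr_{z \sim \mathcal{D}_T}[C(z) = 1] \\
&= \tfrac{1}{2} - \tfrac{1}{2}\Big(\Pr_{z \sim \mathcal{D}_S}[C(z) = 1] - \Pr_{z \sim \mathcal{D}_T}[C(z) = 1]\Big),
\end{align*}
using $\Pr[C=0] = 1 - \Pr[C=1]$ on the source term. Rearranging gives the key identity
\begin{equation*}
1 - 2\epsilon_C = \Pr_{z \sim \mathcal{D}_S}[C(z) = 1] - \Pr_{z \sim \mathcal{D}_T}[C(z) = 1].
\end{equation*}
Thus minimizing the error $\epsilon_C$ over $C \in \mathcal{H}$ is exactly maximizing the signed acceptance gap on the right-hand side, and the optimal classifier $C^*$ attains $1 - 2\epsilon_{C^*} = \sup_{C}\big(\Pr_{z \sim \mathcal{D}_S}[C=1] - \Pr_{z \sim \mathcal{D}_T}[C=1]\big)$.

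The final step is to reconcile the signed supremum above with the absolute-value supremum appearing in the definition of $d_{\mathcal{H}}$. Here I would invoke the symmetry of $\mathcal{H}$: if $C \in \mathcal{H}$ then its complement $1 - C$ also lies in $\mathcal{H}$, and replacing $C$ by $1-C$ negates the signed gap, so the two suprema coincide and $\sup_{C}\big|\Pr_{z \sim \mathcal{D}_S}[C=1] - \Pr_{z \sim \mathcal{D}_T}[C=1]\big| = 1 - 2\epsilon_{C^*}$. Substituting into the divergence definition then yields $d_{\mathcal{H}}(q_\phi(\mathcal{X}_S), q_\phi(\mathcal{X}_T)) = 2(1 - 2\epsilon_{C^*})$. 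The main obstacle I anticipate is precisely this last reconciliation: the clean factor-of-two identity relies on (i) the equal-prior balanced convention for the domain-classification error, which matches the symmetric form of the adversarial objective in Eq.~\eqref{eq:l_adv}, and (ii) $\mathcal{H}$ being closed under label complementation, so that the best classifier realizes the supremum without an absolute value. If either assumption is relaxed, the equality degrades to an inequality and would have to be stated accordingly.
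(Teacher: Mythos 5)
Your proof is correct, but it follows a genuinely different route from the paper's. The paper argues at the level of densities: it writes down the Bayes-optimal discriminator $C^*(z) = p_S(z)/(p_S(z)+p_T(z))$, computes its balanced error as $\epsilon_{C^*} = \frac{1}{2}\int \min(p_S(z), p_T(z))\,dz$, applies the identity $\min(a,b) = \frac{a+b-|a-b|}{2}$ to obtain $\epsilon_{C^*} = \frac{1}{2} - \frac{1}{4}\int |p_S(z) - p_T(z)|\,dz$, identifies the remaining integral with the total variation distance, and finally invokes $d_{\mathcal{H}} = 2\,\mathrm{TV}$. You bypass densities and total variation entirely: starting from the supremum definition of $d_{\mathcal{H}}$, your algebraic identity $1 - 2\epsilon_C = \Pr_{z\sim\mathcal{D}_S}[C(z)=1] - \Pr_{z\sim\mathcal{D}_T}[C(z)=1]$ shows that minimizing the balanced error over $\mathcal{H}$ is literally the same optimization as the supremum defining the divergence, with closure of $\mathcal{H}$ under complementation disposing of the absolute value. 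Your route buys two things the paper's does not: it needs no density assumptions (it is stated for probability measures directly), and it yields the identity for any symmetric hypothesis class, with $C^*$ read as the best-in-class classifier --- whereas the paper's bridge $d_{\mathcal{H}} = 2\,\mathrm{TV}$ is an equality only when $\mathcal{H}$ is rich enough to contain the Bayes classifier (in general $d_{\mathcal{H}} \le 2\,\mathrm{TV}$), an assumption the paper leaves implicit, as it does the balanced-prior convention that you correctly flag. What the paper's computation buys in exchange is the explicit form of the optimal discriminator and the explicit passage through total variation, which is what makes the discussion following the proposition ($\epsilon_{C^*} \to 0.5$ implies $d_{\mathcal{H}} \to 0$) concrete. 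One minor point in your write-up: attainment of the supremum by a single $C^*$ is not automatic for an arbitrary class; for the unrestricted measurable class it is realized by the threshold classifier $\mathbf{1}[p_S(z) \ge p_T(z)]$, i.e., the hard version of the paper's soft $C^*$.
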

\begin{proof}
Consider the optimal domain classifier \( C^* \), which distinguishes between the source and target latent representations. The optimal decision function is given by:
\[
C^*(z) = \frac{p_S(z)}{p_S(z) + p_T(z)},
\]
where \( p_S(z) \) and \( p_T(z) \) denote the probability density functions of the source and target latent representations, respectively.

The classification error of \( C^* \) can be expressed as:
\[
\epsilon_{C^*} = \frac{1}{2} \int \min(p_S(z), p_T(z)) \, dz.
\]

Using the identity \( \min(a, b) = \frac{a + b - |a - b|}{2} \), we obtain:
\[
\int \min(p_S(z), p_T(z)) \, dz = 1 - \frac{1}{2} \int |p_S(z) - p_T(z)| \, dz.
\]

Hence, the classification error simplifies to:
\[
\epsilon_{C^*} = \frac{1}{2} - \frac{1}{4} \int |p_S(z) - p_T(z)| \, dz.
\]

The total variation distance between the source and target latent distributions is defined as:
\[
\text{TV}(q_\phi(\mathcal{X}_S), q_\phi(\mathcal{X}_T)) = \frac{1}{2} \int |p_S(z) - p_T(z)| \, dz.
\]

Substituting into the expression for \( \epsilon_{C^*} \), we have:
\[
\text{TV}(q_\phi(\mathcal{X}_S), q_\phi(\mathcal{X}_T)) = 1 - 2\epsilon_{C^*}.
\]

The \(\mathcal{H}\)-divergence is related to the total variation distance by:
\[
d_{\mathcal{H}}(q_\phi(\mathcal{X}_S), q_\phi(\mathcal{X}_T)) = 2 \, \text{TV}(q_\phi(\mathcal{X}_S), q_\phi(\mathcal{X}_T)).
\]

Thus, substituting the total variation distance:
\[
d_{\mathcal{H}}(q_\phi(\mathcal{X}_S), q_\phi(\mathcal{X}_T)) = 2 \left(1 - 2\epsilon_{C^*}\right).\]
\end{proof}

\section{Implementation Details}
% {\flushleft \textbf{Implementation Details.}}
\label{sec:imp_details}
We set specific parameters for both the proposed architecture and baseline models to ensure fair comparisons. The number of iterations for all baseline topic models is fixed at 100, whereas DALTA converges within 20 epochs to achieve the reported performance in Section \ref{sec:experiment}. For contextualized document representations, we use the sentence transformer \textit{all-MiniLM-L6-v2}\footnote{\href{https://huggingface.co/sentence-transformers/all-MiniLM-L6-v2}{https://huggingface.co/sentence-transformers/all-MiniLM-L6-v2}}. Evaluation metrics are computed using the same parameter settings across all models; for example, the number of top words per topic for calculating $C_V$ and $TD$ is fixed at 10. In text classification experiments, we use the default parameters for SVC and LR from scikit-learn\footnote{\href{https://scikit-learn.org}{https://scikit-learn.org}}.

Our proposed model, DALTA, is built on a Variational Autoencoder (VAE) backbone with a latent dimension of 50. The number of topics is set to 50 for the source domain, while for the target domain, it varies depending on the experiments. To balance source and target domain contributions, DALTA employs a domain-weighted sampling strategy, controlled by a probability parameter $\mu$. Initially, $\mu = 0.7$, prioritizing source-domain samples for stable representation learning. As training progresses, $\mu$ gradually decreases to 0.3, shifting focus to the target domain for adaptation. This ensures the model effectively captures domain-invariant structures while retaining target-specific information through separate decoders.

\section{Computing Infrastructure}
\label{sec:compute}
All experiments were conducted on a server equipped with two AMD EPYC 7302 3GHz CPUs, three NVIDIA Ampere A40 GPUs (48GB VRAM each, 300W), and 256GB RAM.

\section{Case Study: Source Domain Selection via Internal Signals}
\label{sec:casestudy}
Although DALTA is not designed to perform source domain selection, we explore whether its internal training signals can be used as a practical heuristic to guide this choice in real-world low-resource scenarios. This is particularly relevant in situations where multiple candidate source domains are available, but selecting the most compatible one for a given target domain remains an open challenge.

We define a simple alignment score using two quantities readily available during the early stages of DALTA’s training: the \textit{domain alignment loss} $\mathcal{L}_{adv}$, which measures how well the source and target latent representations align, and the \textit{target reconstruction loss} $\mathcal{L}_{rec}^{(T)}$, which captures how accurately the model reconstructs documents in the target domain. The alignment score is defined as:
\[
\text{Alignment Score} = \mathcal{L}_{adv} - \lambda \cdot \mathcal{L}_{rec}^{(T)},
\]
where we set $\lambda = 0.001$ to balance the two components. We compute this after just 5 iterations of training—so it’s fast and doesn’t require full convergence. 

To evaluate the effectiveness of this heuristic, we conduct two case studies using the Newsgroup dataset. In the first case, we treat the NG \textsc{Science} subset as the target domain and consider four source domains: (i) the remaining portion of the 20 Newsgroup corpus excluding NG \textsc{Science}, (ii) AG News, (iii) Arxiv-CS abstracts, and (iv) Drug Review data. We observe that NG (excluding \textsc{Science}) yields the highest alignment score and also results in the best topic coherence and diversity. This suggests that even a partially overlapping source domain can provide valuable inductive bias for adaptation when its latent space aligns well with the target.

In the second case, we use NG \textsc{Religion} as the target domain and evaluate the same set of sources. AG News achieves the highest alignment score and also produces the best topic quality. While NG (excluding \textsc{Religion}) contains more topical overlap, its latent alignment with the target appears weaker, possibly due to vocabulary shifts or semantic granularity mismatches. In contrast, AG News contains general-purpose news content---including politics and society---that implicitly overlaps with religious discourse, leading to better adaptation.

These case studies suggest that DALTA's internal training dynamics can be leveraged to estimate source utility early in the learning process. Although this alignment score is heuristic and task-specific, it provides a promising starting point for developing lightweight, data-driven strategies for source domain selection in low-resource topic modeling.

\begin{table}[h!]
\centering
% \small
% % \setlength{\tabcolsep}{5pt}
\resizebox{1.0\linewidth}{!}{%
\begin{tabular}{llccc}
\toprule
\textbf{Target} & \textbf{Source} & \textbf{DL} & \textbf{RL\textsubscript{T}} & \textbf{Score} \\
\midrule
\multirow{4}{*}{NG Science} 
  & NG (w/o Science) & 0.369 & 564.68 & \textbf{-0.196} \\
  & AG News          & 0.116 & 522.34 & -0.406 \\
  & Arxiv-CS         & 0.129 & 557.81 & -0.428 \\
  & Drug Review      & 0.095 & 524.99 & -0.430 \\
\midrule
\multirow{4}{*}{NG Religion} 
  & AG News          & 0.255 & 726.58 & \textbf{-0.472} \\
  & NG (w/o Religion)& 0.289 & 771.92 & -0.483 \\
  & Drug Review      & 0.208 & 732.50 & -0.525 \\
  & Arxiv-CS         & 0.110 & 765.89 & -0.656 \\
\bottomrule
\end{tabular}}
\caption{Alignment scores across source domains using DALTA's early training signals. Higher scores indicate better alignment and modeling suitability for the target domain.}
\label{tab:alignment-case-study}
\end{table}

\end{document}